\definecolor{codegreen}{rgb}{0,0.6,0}
\definecolor{codegray}{rgb}{0.5,0.5,0.5}
\definecolor{codepurple}{rgb}{0.58,0,0.82}
\definecolor{backcolour}{rgb}{0.95,0.95,0.92}
\lstdefinestyle{mystyle}{
    backgroundcolor=\color{backcolour},   
    commentstyle=\color{codegreen},
    keywordstyle=\color{magenta},
    numberstyle=\tiny\color{codegray},
    stringstyle=\color{codepurple},
    basicstyle=\ttfamily\footnotesize,
    breakatwhitespace=false,         
    breaklines=true,                 
    captionpos=b,                    
    keepspaces=true,                 
    numbers=left,                    
    numbersep=5pt,                  
    showspaces=false,                
    showstringspaces=false,
    showtabs=false,                  
    tabsize=2
}
\theoremstyle{definition}
\newtheorem*{proposition*}{Proposition}
\setlist[enumerate]{leftmargin=0.5cm,topsep=0pt,itemsep=-2pt}
\setlist[itemize]{leftmargin=0.5cm,topsep=0pt,itemsep=-2pt}
\DeclareMathOperator*{\argmin}{arg\,min}
\Crefname{assumption}{Assumption}{Assumptions}
\Crefname{equation}{Equation}{Equations}
\newcommand{\vertiii}[1]{{\left\vert\kern-0.25ex\left\vert\kern-0.25ex\left\vert #1 
    \right\vert\kern-0.25ex\right\vert\kern-0.25ex\right\vert}}
\newcommand{\byol}{\texttt{BYOL}\xspace}
\newcommand{\simsiam}{\texttt{SimSiam}\xspace}
\begin{document}

\twocolumn[
\icmltitle{The Edge of Orthogonality: A Simple View of What Makes BYOL Tick}
\icmlsetsymbol{equal}{*}

\begin{icmlauthorlist}
\icmlauthor{Pierre H. Richemond}{dm}
\icmlauthor{Allison Tam}{dm}
\icmlauthor{Yunhao Tang}{dm}
\icmlauthor{Florian Strub}{dm}
\icmlauthor{Bilal Piot}{dm}
\icmlauthor{Felix Hill}{dm}
\end{icmlauthorlist}

\icmlaffiliation{dm}{DeepMind}

\icmlcorrespondingauthor{Pierre Harvey Richemond}{richemond@deepmind.com}

\icmlkeywords{Machine Learning, ICML}

\vskip 0.3in
]



\printAffiliationsAndNotice{} 

\begin{abstract}
Self-predictive unsupervised learning methods such as \byol~\citep{Grill2020} or \simsiam~\citep{SimSIAM} have shown impressive results, and counter-intuitively, do not collapse to trivial representations. In this work, we aim at exploring the simplest possible mathematical arguments towards explaining the underlying mechanisms behind self-predictive unsupervised learning. We start with the observation that those methods crucially rely on the presence of a predictor network (and stop-gradient). With simple linear algebra, we show that when using a linear predictor, the optimal predictor is close to an orthogonal projection, and propose a general framework based on orthonormalization that enables to interpret and give intuition on why \byol works. In addition, this framework demonstrates the crucial role of the exponential moving average and stop-gradient operator in \byol as an efficient orthonormalization mechanism. We use these insights to propose four new \emph{closed-form predictor} variants of \byol to support our analysis. Our closed-form predictors outperform standard linear trainable predictor \byol at $100$ and $300$ epochs (top-$1$ linear accuracy on ImageNet).
\end{abstract}

\section{Introduction}
\label{sec:intro}
Modern Self-Supervised Learning (SSL) methods~\citep{Grill2020,SimCLR} have all but closed the performance gap with supervised methods in computer vision. Famously, these methods also form the basis for multimodal representation algorithms that power useful downstream applications such as generative modelling at scale~\citep{Radford2021LearningTV, Jia2021ScalingUV, Ramesh2022HierarchicalTI, Alayrac2022FlamingoAV}. Amongst SSL methods broadly two families of algorithms coexist. First, \textbf{contrastive} methods iteratively push the representations of \emph{positive pairs} of examples closer together, and pull \emph{negative pairs} of examples further apart. They optimize a lower bound on the mutual information between representations and data~\citep{CPC}.

Second, \textbf{self-predictive} methods rely on minimizing a least-squares objective between two network outputs with two different augmented views of the same inputs. This may involve two related or \emph{Siamese}~\citep{SiameseNetworks} neural network streams with or without the same architecture~\citep{Grill2020, SimSIAM, Ermolov2021}. For instance, \byol features both an \emph{online} network, composed of an encoder and a predictor and a \emph{target} network, composed of a similar encoder but not sharing weights. The target encoder is updated through an exponential moving average of the online encoder, whereas the online network (encoder and predictor) is updated through gradient descent by minimizing a \byol loss between the stop-gradiented target encoder outputs and online predictor outputs. Both outputs are computed from the same inputs with different augmentations. Self-predictive learning is more memory-efficient than contrastive learning as it does not rely on extremely large batch sizes (due to negative examples) to reach good performance. However, it was initially surprising and opaque why those algorithms learn anything at all rather than collapsing to a single all-attracting constant value. Yet, several works started unveiling the underlying mechanisms behind self-predictive unsupervised learning~\citep{Tian2021UnderstandingSL,bridgingthegap, Eigenspaceview} (see Sec.~\ref{sec:related_work}). Here we propose simple explanations for this phenomenon through linear algebra, and use those to derive new, closed-form predictors for \byol. 

The crucial starting point is to notice that \byol can be seen as matrix trace maximization, and we here refine this argument. Several fundamental algorithms in machine learning such as PCA~\citep{PCAPearson},  spectral clustering~\citep{Cheeger1969ALB, Shi1997NormalizedCA}, or non-negative matrix factorization~\citep{Paatero1994PositiveMF, Lee1999LearningTP}, share this property and are also readily written as least-squares minimization objectives. Crucially they do not collapse \emph{thanks to optimization constraints}~\citep{Kokiopoulou2011TraceOA}. For instance, the variational characterization of PCA implies maximization over orthogonal matrices~\citep{Horn2013MatrixA}. In this work and as core contribution, we argue that a linear predictor version of \byol is no different, but imposes predictor constraints that are both \emph{approximate} and \emph{implicitly enforced by its architecture}. Importantly, we justify these points using standard linear algebra, specifically, the optimal linear predictor is close to an orthogonal projection. This subsumes prior work~\citep{Tian2021UnderstandingSL, bridgingthegap, Eigenspaceview} and enables a more general view informed by optimization on Riemannian manifolds of orthogonal matrices~\citep{Edelman1998TheGO, Absil2007OptimizationAO, Bonnabel2013StochasticGD}. We leverage this connection to propose several new self-predictive self-supervised algorithms with \emph{closed-form predictors}.

\textbf{Contributions.} Our contributions are as follows:
\begin{itemize}
    \item We investigate the role of the linear predictor in \byol and show its role as an approximate \emph{orthogonal projection} as well as a spectral expansion operator for the covariance of latents.
    \item We interpret the \byol update as a Riemannian gradient descent step, where the expensive \emph{retraction} step is made computationally trivial by the predictor and EMA.
    \item We use those principles as inspiration to propose new self-supervised learning methods with four closed-form predictors, summarized Table \ref{summarypredictors}. We evaluate those empirically on ImageNet-$1$k, with a ResNet-$50$ encoder, at $100$ and $300$ epochs. Our closed-form predictors outperform standard linear trainable predictor \byol while only using matrix multiplications.
\end{itemize}

Besides unifying and simplifying our understanding of self-predictive unsupervised learning, these theoretical insights yield methods that can be more data- and compute-efficient, thus application-friendly.

\vspace{-1.0em}
\section{Background and notations}
Given batch size $b\in\mathbb{N}$, input dimension $d\in\mathbb{N}$ and latent dimension $f\in\mathbb{N}$, we consider batches of datapoints $X=(x^i)_{1\leq i\leq b}$, where the batch $X$ can be seen as a real matrix of shape $(b,d)$ and each datapoint $x^i\in\mathbb{R}^d$ as a real vector. The datapoints $x^i=t^i(o^i)$ are transformations of the original raw inputs $o^i\in\mathbb{R}^d$ where the augmentations $t^i\sim\mathcal{T}$ are chosen uniformly from distribution $\mathcal{T}$. Similarly $X^{\prime}=(x^{\prime i}=t^{\prime i}(o^i))_{1\leq i\leq b}$ is augmented from distribution $\mathcal{T}^{\prime}$. These are passed through the combined operation of an encoder and a projector, denoted as $A_\theta$ for the online branch and $A_\xi$ for the target branch, resulting in latents $Z_\theta=A_\theta(X)$ and $Z^{\prime}_\xi=A_\xi(X^{\prime})$. The latents $Z_\theta=(z_{\theta}^i)_{1\leq i\leq b}$ and $Z^\prime_\xi=(z^{\prime i}_\xi)_{1\leq i\leq b}$ are real matrices of shape $(b,f)$ with respective row vectors $z^i_\theta\in\mathbb{R}^f$ and $z^{\prime i}_\xi \in \mathbb{R}^f$. The predictor is denoted by $P_\theta$. The operator norm of a matrix $X$ is noted $\vertiii{X}$, its Frobenius norm $\left\| X \right\|_{F}$. Vector norms $\left\| x \right\|_2$ are Euclidean. The target parameter $\xi$ is updated via exponential moving average with parameter $\tau$ and the online parameter $\theta$ is updated through gradient descent via the  \byol objective.
 
\textbf{BYOL objective}. Given a predictor $P_\theta$ and two distributions of latents $Z_\theta$ and $Z_\xi^\prime$, the \byol objective is the expected sum over the batch of the normalized $L^2$ distance between the target outputs $z_{\xi}^{\prime i}$ and the online predictor outputs $P_\theta(z^i_{\theta})$:
\begin{equation}\label{byolgeneralobjective}
L_{\text{BYOL}}(\theta) = \mathbb{E}\left[\sum_{i=1}^b\left\| \frac{P_\theta(z^i_{\theta})}{\|P_\theta(z^i_{\theta})\|_2} - \texttt{sg}\left(\frac{z_{\xi}^{\prime i}}{\|z_{\xi}^{\prime i}\|_2}\right)  \right\|^{2}_{2} \right]
\end{equation}
where $\texttt{sg}(.)$ is a stop-gradient operator preventing backpropagation through parameters $\xi$. The stop-gradient notation is redundant and may be omitted, as the \byol objective is optimized with respect to $\theta$ only.

\section{A focus on the linear predictor}
Despite the simplicity of the \byol objective (\ref{byolgeneralobjective}),  prior work on \byol theory has involved non-trivial matrix ordinary differential equation analysis under strong assumptions \citep{Tian2021UnderstandingSL}. We here present proofs with simpler linear algebra arguments. To do so, we make the two following simplifying assumptions from now on. First, we assume a linear predictor $P_\theta(Z_\theta) = Z_{\theta} P_\theta$, where $P_\theta \in \mathbb{R}^{f \times f}$ is a real square matrix. Second, we use the unnormalized $L^2$ distance rather than the normalized variant in our theoretical analysis. These two changes only worsen top-$1$ accuracy by a few percentage points as shown in \citet{Grill2020}. Therefore, the \byol objective becomes:
\begin{equation}\label{byolobjective}
    L_{\text{BYOL}}(\theta) = \mathbb{E}\left[\sum_{i=1}^b\left\|  z^i_{\theta} P_\theta  -z _{\xi}^{\prime i}  \right\|_{2}^{2}\right] = \mathbb{E}\left[\left\|  Z_{\theta} P_\theta  - Z _{\xi}^{\prime} \right\|_{F}^{2}\right], 
\end{equation}
by definitions of $Z_\theta$, $Z_{\xi}^{\prime}$ and the Frobenius norm.
Under this form we can also see the \byol objective as a matrix trace:
\begin{align}\label{byoltrace}
    L_{\text{BYOL}}(\theta)
            &= \operatorname{tr} \left[ \mathbb{E} \big( (Z_{\theta} P_\theta -Z _{\xi}^{\prime})^{\top} (Z_{\theta} P_\theta -Z _{\xi}^{\prime}) \big) \right].
\end{align}
We begin with proving a crucial property of the linear optimal projector, namely, that it is an orthogonal projector (Prop.~\ref{prop:one}). We illustrate empirically that the subspace being projected upon grows throughout training, until the optimal predictor becomes the identity matrix at the end of training. This means \byol does not collapse, and its feature space grows and, under conditions (Prop.~\ref{predictoridentity}), retains as much information as possible. Using these insights we propose a new closed-form expression for a batchwise optimal predictor.

\subsection{Optimal linear predictor and orthogonal projection}

As described in~\citet{Grill2020}, the optimal predictor induces the following conditional expectation:
\begin{equation*}
P^{\star}(z_{\theta}) \triangleq \underset{P}{\arg \min }\quad  \mathbb{E}\left[\left\|P\left(z_{\theta}\right)-z_{\xi}^{\prime}\right\|^{2}_2\right] = \mathbb{E}\left[z_{\xi}^{\prime} \mid z_{\theta}\right].
\end{equation*}
When the predictor is both optimal and \emph{linear}, it performs exactly a linear regression, on a batch-by-batch basis, and so is given by:
    \begin{equation}\label{closed_form_lrp_predictor}
P^{\star}=\argmin_P \quad \left\|Z_{\theta} P - Z_{\xi}^{\prime}\right\|_{F}^{2} = \left(Z_{\theta}^{\top} Z_{\theta}\right)^{-1} Z_{\theta}^{\top}  Z_{\xi}^{\prime}.
\end{equation}
We refer to this as $\mathbf{LRP}$ (linear regression predictor). We can use this setting to derive an important property of the batchwise-optimal linear predictor, as per proposition below:
\vspace{-0.5em}
\begin{restatable}{prop}{propone}\label{prop:one}
The linear optimal predictor, batchwise, is an orthogonal projection.
\end{restatable}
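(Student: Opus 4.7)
The plan is to substitute the closed-form expression $P^\star = (Z_\theta^\top Z_\theta)^{-1} Z_\theta^\top Z_\xi^\prime$ from~(\ref{closed_form_lrp_predictor}) back into the predicted latents $Z_\theta P^\star$ and recognize the resulting operator as the classical least-squares ``hat matrix,'' which is the canonical example of an orthogonal projector. Concretely, I would compute $Z_\theta P^\star = H\, Z_\xi^\prime$ with $H := Z_\theta (Z_\theta^\top Z_\theta)^{-1} Z_\theta^\top$ (replacing the inverse by the Moore--Penrose pseudoinverse in the rank-deficient case), and then verify directly that $H^\top = H$ and $H^2 = H$. These two identities together characterize $H$ as the orthogonal projector onto $\mathrm{col}(Z_\theta)$, so the prediction $Z_\theta P^\star$ is exactly the orthogonal projection of the target latents $Z_\xi^\prime$ onto the column space of the online latents.

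An equivalent, arguably slicker route is purely geometric: write the first-order (normal) equations $Z_\theta^\top(Z_\theta P^\star - Z_\xi^\prime) = 0$ that any minimizer of the quadratic in~(\ref{byolobjective}) must satisfy, read them as $Z_\theta^\top R = 0$ for the residual $R = Z_\xi^\prime - Z_\theta P^\star$, and conclude that the fitted values are orthogonal to the residuals. By the variational characterization of orthogonal projection, this forces $Z_\theta P^\star$ to be the unique orthogonal projection of $Z_\xi^\prime$ onto $\mathrm{col}(Z_\theta)$. I would probably present both routes side by side since each illuminates a different side of the statement (algebraic idempotence of $H$ vs.\ geometric orthogonality of the residual).

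The main obstacle is really one of interpretation rather than computation: as an $f \times f$ matrix on its own, $P^\star = Z_\theta^{+} Z_\xi^\prime$ is \emph{not} in general symmetric and idempotent for arbitrary batches, so the proposition cannot mean $(P^\star)^2 = P^\star$ and $(P^\star)^\top = P^\star$ literally. The correct reading is that the prediction map $Z_\theta \mapsto Z_\theta P^\star$, viewed as an operator acting through $Z_\theta$, realizes an orthogonal projection --- the orthogonality being carried by the hat matrix $H$ in the ambient batch space. I would state this scope clearly at the outset of the proof, handle $\mathrm{rank}(Z_\theta) < f$ via the pseudoinverse, and then let the two one-line checks $H^\top = H$ and $H^2 = H$ close the argument.
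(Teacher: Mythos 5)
Your main route is precisely the paper's proof: substitute the closed-form minimizer from (\ref{closed_form_lrp_predictor}) into the loss and recognize the hat matrix $Z_{\theta}\left(Z_{\theta}^{\top} Z_{\theta}\right)^{-1} Z_{\theta}^{\top}$ as the orthogonal projection onto the span of the batch $Z_\theta$, so that $L_{\text{BYOL},P^{*}}$ becomes the squared norm of $\operatorname{Proj}_{(Z_\theta)^{\bot}}(Z_{\xi}^{\prime})$; your explicit checks of symmetry and idempotence and the normal-equations variant are just slightly more detailed ways of saying the same thing. Your scope remark is also apt and consistent with the paper, which likewise treats the orthogonality as living in the batch-space projector rather than in the $f \times f$ matrix $P^{\star}$ itself (indeed the paper concedes in the proof of Proposition~\ref{predictoridentity} that $P^{\star}$ need not satisfy $P^{\star 2}=P^{\star}$ exactly).
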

\emph{Proof:} Inserting the right side of equation \ref{closed_form_lrp_predictor} into the \byol loss, reveals the term $Z_{\theta} \left(Z_{\theta}^{\top} Z_{\theta}\right)^{-1} Z_{\theta}^{\top}$. That term is the orthogonal projection on the subspace spanned by the batch $Z_\theta$, we denote it by 
$\operatorname{Proj}_{(Z_\theta)} = Z_{\theta} \left(Z_{\theta}^{\top} Z_{\theta}\right)^{-1} Z_{\theta}^{\top}$. The identity minus that term is also an orthogonal projection, on the orthogonal complement of that space, denoted by 
$\operatorname{Proj}_{(Z_\theta)^{\bot}}$. With such notations the \byol loss for an optimal linear predictor $P^{*}$ becomes:
\begin{equation}\label{projectororthogonalonbatch}
L_{\text{BYOL}, P^{*}} = \mathbb{E}\Big[ \left\| \operatorname{Proj}_{(Z_\theta)^{\bot}} (Z_{\xi}^{\prime}) \right\|_{F}^{2} \Big].
\end{equation}
The \byol objective for optimal linear predictor can be seen as a reconstruction loss, performing self-denoising. This is a specialization of the relationship $L_{\text{BYOL}, P^{*}} = \mathbb{E}\left[\sum_{j} \operatorname{Var}\left(z_{\xi, j}^{\prime} \mid z_{\theta}\right)\right]$ that appeared in \citet{Grill2020}. Our equation \ref{projectororthogonalonbatch} gives additional insights. 

First, it shows qualitatively how the stop-gradient operation is absolutely necessary: without stop-gradient, the projection argument $Z'_\xi$ would be a $Z'_\theta$, and \byol could be incentivized to minimize its loss by moving its representation $Z_\theta$ to be orthogonal to itself, triggering collapse. Second, we also see that the batch size controls the rank of denoising approximation when it is lower than feature dimensionality.

\textbf{Scaled orthogonal projection, towards the identity.} An orthogonal projection $P$ satisfies $P^2=P$ and $P=P^{\top}$. We will assume that the linear predictor is close to optimality, thus it verifies both equalities approximately during training, and exactly at the end of training. Furthermore, as the original normalized $L^2$ distance formulation of the \byol loss (with denominator) is invariant by strictly positive scaling of the linear predictor, we will also neglect scaling of the predictor from now on. Hence we can assume its maximal eigenvalue to be $1$ at all times by spectral normalization, and all of its eigenvalues to be close either to $1$ or $0$ (see Figures \ref{fig:DSVNS} and \ref{fig:DistribNS}), with the evolution of the predictor rank being a main object of interest. 
We now show that if both network branches in \byol fully co-adapt at the end of training, the optimal linear predictor becomes full-rank, i.e., the identity:
\begin{restatable}{prop}{propthree}\label{predictoridentity}
Assume a linear additive feature noise for each batch $Z_{\xi}^{\prime} = Z_{\theta} + \sigma_t \cdot G$, with G a random noise matrix whose entries we can hypothesize are i.i.d. with respect to each other and through time, and $\sigma_t$ a time-dependent, scalar variance chosen so that $|||G||| = 1$.  If additionally $\vertiii{Z_\theta - Z_\xi^{'}} = \sigma_t \rightarrow 0$ at convergence of \byol training, then the linear predictor converges to the identity matrix, asymptotically.
\end{restatable}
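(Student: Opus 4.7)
The plan is to substitute the assumed noise model directly into the closed-form linear regression predictor of Eq.~\ref{closed_form_lrp_predictor} and show that the deviation from the identity is controlled by $\sigma_t$ in operator norm.

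First, I would plug $Z_\xi' = Z_\theta + \sigma_t G$ into $P^{\star} = (Z_\theta^\top Z_\theta)^{-1} Z_\theta^\top Z_\xi'$. Assuming $Z_\theta$ has full column rank (so $Z_\theta^\top Z_\theta$ is invertible, which is both implicit in the use of Eq.~\ref{closed_form_lrp_predictor} and consistent with the non-collapse regime discussed in Section~3), the leading term collapses and I obtain
\begin{equation*}
P^{\star} \;=\; (Z_\theta^\top Z_\theta)^{-1} Z_\theta^\top Z_\theta \;+\; \sigma_t (Z_\theta^\top Z_\theta)^{-1} Z_\theta^\top G \;=\; I + \sigma_t Z_\theta^{\dagger} G,
\end{equation*}
where $Z_\theta^{\dagger}$ is the Moore--Penrose pseudoinverse of $Z_\theta$.

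Second, I would bound the perturbation by submultiplicativity of the operator norm, using the normalization $\vertiii{G} = 1$:
\begin{equation*}
\vertiii{P^{\star} - I} \;=\; \sigma_t \vertiii{Z_\theta^{\dagger} G} \;\leq\; \sigma_t\, \vertiii{Z_\theta^{\dagger}} \cdot \vertiii{G} \;=\; \sigma_t \,\vertiii{Z_\theta^{\dagger}}.
\end{equation*}
Since $\vertiii{Z_\theta^{\dagger}} = 1/\sigma_{\min}(Z_\theta)$, and the assumption $\sigma_t \to 0$ is exactly the hypothesis, the right-hand side vanishes provided $Z_\theta$ stays uniformly well-conditioned at convergence.

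The main obstacle is precisely this conditioning issue: a priori, nothing prevents $\sigma_{\min}(Z_\theta)$ from decaying as fast as $\sigma_t$, in which case the bound would be vacuous. To address this I would appeal to the discussion in Section~3 that the eigenvalues of $P^{\star}$ concentrate near $1$ (or $0$) and the rank of $Z_\theta$ grows during training, so that $\vertiii{Z_\theta^{\dagger}}$ remains $O(1)$. As a complementary, more robust argument, I would exploit the i.i.d.\ (and implicitly zero-mean) hypothesis on $G$: taking expectation over the noise gives $\mathbb{E}[P^{\star} - I] = \sigma_t Z_\theta^{\dagger}\,\mathbb{E}[G] = 0$ and the elementwise variance of $P^{\star}$ scales as $\sigma_t^2$ times a factor controlled by $Z_\theta^{\dagger}$, so $P^{\star}$ concentrates on $I$ as $\sigma_t \to 0$. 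Combining the deterministic operator-norm bound with this stochastic concentration yields the claimed asymptotic convergence $P^{\star} \to I$.
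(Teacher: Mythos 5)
Your proposal is correct and takes essentially the same route as the paper: substitute the noise model into the closed-form predictor of Eq.~\eqref{closed_form_lrp_predictor} to obtain $P^{\star} = I_f + \sigma_t \left(Z_{\theta}^{\top} Z_{\theta}\right)^{-1} Z_{\theta}^{\top} G$ and argue that the perturbation term vanishes as $\sigma_t \to 0$ under a regularity condition on $Z_\theta$. If anything, your bound $\vertiii{P^{\star} - I} \leq \sigma_t \vertiii{Z_\theta^{\dagger}}$ pins down the needed condition more precisely (a lower bound on $\sigma_{\min}(Z_\theta)$, i.e.\ a bounded pseudoinverse) than the paper's informal ``provided $Z_\theta$ is bounded,'' and your additional zero-mean concentration remark is a harmless extra not present in the paper.
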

\emph{Proof:} From now on, our proofs are deferred to Appendix.

\subsection{Empirical evolution of predictor rank}
Proposition \ref{predictoridentity} shows the predictor can reach the full-rank identity matrix at the end of training, in the idealization that the training loss effectively goes to $0$. We thus seek to observe the evolution of the linear predictor rank during training. Since eigenvalues of small magnitude (compared to numerical precision) might make the rank hard to identify, we use proxy estimates. The \emph{stable rank} \citep{stablerank} of a matrix $M$---a lower bound on its standard rank---is defined as $\operatorname{srank}(M) = \left\|M \right\|_{F}^2/\vertiii{M}^2$, 
and for an orthogonal projection $P$ is just $\left\| P \right\|_{F}^2 = \operatorname{tr}(P P^{\top})=\operatorname{tr}(P)$. An alternative estimate for the rank of an approximate projection is also its trace. We empirically find that both rank metrics, stable rank and trace, increase during training (Figure \ref{fig:srankpred}), with the stable rank generally increasing \emph{monotonically}, showing expansion of the subspace span of the predictor. 
In particular, the eigenvalues of the rescaled predictor converge to either $0$ or $1$, with a greater fraction of the latter as training progresses. We can use this observation on the rank deficiency of the batchwise optimal predictor to derive a new closed-form predictor.

\begin{figure}[t]
    \centering
    \includegraphics[width=.495\textwidth]{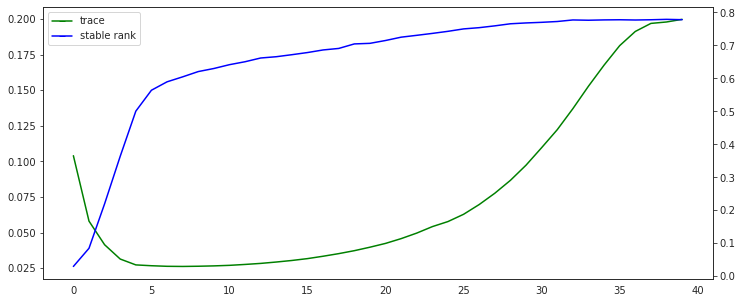}
    \vskip -0.75em
    \caption{Normalized stable rank (blue) and trace (green) of the \byol linear trainable predictor, throughout training on ImageNet-$1$k for $100$ epochs. Average over $3$ random seeds.}
    \label{fig:srankpred}
    \vskip -1em
\end{figure}

\subsection{A new closed-form predictor}

Intuitively the rank deficiency of the predictor early on in training---and potentially the associated batch covariance matrix $(Z_{\theta}^{\top} Z_{\theta})$---is one reason that can make the matrix inversion in the LRP predictor (Equation \ref{closed_form_lrp_predictor}) numerically unstable, and hinder performance. This suggests sidestepping this issue by using a Taylor-like expansion for the matrix inverse instead, leading to a new closed-form batchwise predictor:
\begin{restatable}{prop}{propfour}\label{NE}
The expression
\begin{equation}\label{neumann}
    P_{\theta} := 2 \cdot Z_{\theta}^{\top} Z_{\xi}^{\prime} - Z_{\theta}^{\top} Z_{\theta} Z_{\theta}^{\top} Z_{\xi}^{\prime}
\end{equation}
gives another batchwise, approximately optimal predictor. As it is obtained by Neumann expansion $\big(Z_\theta^{\top} Z_\theta \big)^{-1} \approx 2I - Z_\theta^{\top} Z_\theta + \cdots$, we name this the $\mathbf{NE}$ predictor.
\end{restatable}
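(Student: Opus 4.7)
The plan is to start from the closed-form LRP predictor derived in Equation~\ref{closed_form_lrp_predictor}, namely $P^{\star} = (Z_\theta^\top Z_\theta)^{-1} Z_\theta^\top Z_\xi'$, and replace the expensive (and potentially ill-conditioned) inverse $(Z_\theta^\top Z_\theta)^{-1}$ by a truncated Neumann series. Concretely, I would rewrite $Z_\theta^\top Z_\theta = I - (I - Z_\theta^\top Z_\theta)$ and invoke the classical Neumann identity
\begin{equation*}
(Z_\theta^\top Z_\theta)^{-1} \;=\; \sum_{k=0}^{\infty} \bigl(I - Z_\theta^\top Z_\theta\bigr)^{k},
\end{equation*}
valid whenever $\vertiii{I - Z_\theta^\top Z_\theta} < 1$. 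Truncating to first order gives the approximation $(Z_\theta^\top Z_\theta)^{-1} \approx 2I - Z_\theta^\top Z_\theta$, and plugging this back into the LRP expression yields exactly
\begin{equation*}
P_\theta \;\approx\; (2I - Z_\theta^\top Z_\theta)\, Z_\theta^\top Z_\xi' \;=\; 2\, Z_\theta^\top Z_\xi' - Z_\theta^\top Z_\theta Z_\theta^\top Z_\xi',
\end{equation*}
which is the NE predictor of Equation~\ref{neumann}.

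The only real obstacle is justifying that the Neumann series converges (or at least that its first-order truncation is a reasonable approximation), i.e.\ that the spectrum of $Z_\theta^\top Z_\theta$ lies in an interval around $1$ with $\vertiii{I - Z_\theta^\top Z_\theta} < 1$. I would address this by invoking the standard preprocessing present in the \byol pipeline: batch-normalization of latents rescales their empirical second-moment matrix towards the identity, pushing the eigenvalues of $Z_\theta^\top Z_\theta$ close to $1$ and thereby ensuring that $\vertiii{I - Z_\theta^\top Z_\theta}$ is small. Under this mild spectral control, the error of the first-order truncation is of order $\vertiii{I - Z_\theta^\top Z_\theta}^{2}$, and the NE predictor is therefore a batchwise \emph{approximately} optimal predictor in the same sense as LRP. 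I would also note that higher-order truncations could be retained to improve the approximation at additional compute cost, but that the first-order form is already the natural minimal correction to the naive choice $P_\theta = Z_\theta^\top Z_\xi'$, which corresponds to the zeroth-order Neumann approximation $(Z_\theta^\top Z_\theta)^{-1} \approx I$.
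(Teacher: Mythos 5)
Your proposal is correct and follows essentially the same route as the paper's proof: substituting the first-order Neumann truncation $(Z_\theta^\top Z_\theta)^{-1} \approx 2I - Z_\theta^\top Z_\theta$ into the closed-form LRP predictor of Equation~\ref{closed_form_lrp_predictor}, with the approximation justified by the covariance of latents being close to the identity (the paper invokes the end-of-training regime and rescaling of batch norms, you invoke the spectral condition $\vertiii{I - Z_\theta^\top Z_\theta} < 1$ via normalization, which is the same idea made slightly more explicit). Your remark that the zeroth-order truncation recovers $Z_\theta^\top Z_\xi'$, i.e.\ DirectCopy, also matches the paper's observation.
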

If the magnitude of latent elements in batches is small, e.g. with well-chosen normalization, the fourth-order term $Z_{\theta}^{\top} Z_{\theta} Z_{\theta}^{\top} Z_{\xi}^{\prime}$ in Equation \ref{neumann} can be considered small, so that the optimal linear predictor reduces to $2 \cdot Z_{\theta}^{\top} Z_{\xi}^{\prime}$. Our NE predictor can therefore also be interpreted as generalizing the \emph{DirectCopy} \citep{Wang2021TowardsDR} predictor with an added correction term providing performance benefits, as we will see in Section~\ref{resultssection}.

\section{A spectral view of linear \byol}

In this section we take a tour of \byol through a spectral decomposition lens in order to characterize more precisely what it learns. We begin with showing that the optimal linear predictor, conditionally on the latents \emph{and its own rank} $k$, performs $k$-PCA on latents, identifying the leading $k$ eigendirections of the covariance of latents. We then prove that under additional linear features assumption, the linear encoder features are decorrelated at the end of training. This view justifies taking the matrix square root of the empirical covariance as a predictor; accordingly we propose two new predictor iterations.

\subsection{Spectral view of the rank-conditioned predictor}

The following proposition highlights what an optimal linear predictor learns, conditionally to its rank \emph{and} the latents coming from the encoder and projector networks:

\begin{restatable}{prop}{propfive}\label{prop-kpca}
Assume no stochastic augmentations or EMA operation ($\tau=1, \xi = \operatorname{sg}(\theta)$). Further, assume $P$ to be optimal and thus \textbf{exactly} an orthogonal projection batchwise. Conditionally on the latents \textbf{and} the rank of the predictor, $k$, the \byol inner loss solves a $k$-PCA problem. The predictor's span is then given by the top-$k$ eigenvectors of the latents' covariance matrix, $\mathbb{E}[Z^{\top}_{\theta} Z_\theta]$.
\end{restatable}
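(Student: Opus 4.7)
The plan is to reduce the BYOL loss under the stated assumptions to a trace-maximization problem over rank-$k$ orthogonal projections, and then to invoke the Ky Fan maximum principle to conclude that the optimizer is precisely projection onto the top-$k$ eigenspace of $\mathbb{E}[Z_\theta^\top Z_\theta]$.

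First, I would unpack the assumptions. With $\tau=1$ and $\xi=\operatorname{sg}(\theta)$ and no stochastic augmentation, we have $Z_\xi^{\prime}=Z_\theta$ (up to the stop-gradient, which is inert here because we only differentiate through $P$ at this stage). Substituting into the linear BYOL objective \eqref{byolobjective} gives
\begin{equation*}
    L_{\text{BYOL}}(P) \;=\; \mathbb{E}\bigl[\|Z_\theta P - Z_\theta\|_F^{2}\bigr] \;=\; \mathbb{E}\bigl[\|Z_\theta(I-P)\|_F^{2}\bigr].
\end{equation*}

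Second, I would exploit the orthogonal-projection assumption on $P$. Since $P$ is an orthogonal projection of rank $k$, so is $I-P$, which in particular satisfies $(I-P)^\top(I-P)=I-P$. Expanding the Frobenius norm and using cyclicity of the trace yields
\begin{equation*}
    L_{\text{BYOL}}(P) \;=\; \operatorname{tr}\!\bigl((I-P)\,\Sigma\bigr) \;=\; \operatorname{tr}(\Sigma) - \operatorname{tr}(P\Sigma),
\end{equation*}
where $\Sigma \triangleq \mathbb{E}[Z_\theta^\top Z_\theta]$ is the (symmetric positive semidefinite) latent covariance. Conditioning on the encoder/projector outputs makes $\Sigma$ a fixed matrix, and conditioning additionally on $\operatorname{rank}(P)=k$ fixes the feasible set for the remaining optimization.

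Third, minimizing $L_{\text{BYOL}}$ thus reduces to
\begin{equation*}
    \max_{P} \; \operatorname{tr}(P\Sigma) \quad \text{subject to } P^{2}=P,\; P^{\top}=P,\; \operatorname{rank}(P)=k,
\end{equation*}
which is exactly the variational formulation of $k$-PCA: writing $P=UU^{\top}$ with $U\in\mathbb{R}^{f\times k}$ having orthonormal columns, the problem becomes $\max_{U^\top U=I_k} \operatorname{tr}(U^{\top}\Sigma U)$. By the Ky Fan maximum principle (equivalently Courant--Fischer applied to the sum of the top $k$ Rayleigh quotients), the maximum equals $\sum_{i=1}^{k}\lambda_i(\Sigma)$ and is attained precisely when the columns of $U$ span the top-$k$ eigenspace of $\Sigma$. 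Therefore $\operatorname{span}(P)$ coincides with the span of the leading $k$ eigenvectors of $\mathbb{E}[Z_\theta^{\top}Z_\theta]$, which is the claimed $k$-PCA solution.

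The proof is essentially a clean reduction; there is no deep obstacle. The one place that requires a little care is aligning conventions: verifying that, with stop-gradient on the target branch but the predictor still being optimized, the objective genuinely collapses to $\mathbb{E}\|Z_\theta(I-P)\|_F^2$ and not a symmetric variant, and that conditioning on $k$ is meant in the sense of optimizing $P$ within the feasible set of rank-$k$ orthogonal projections (so that Ky Fan applies without needing to also optimize over $k$). Once that framing is fixed, the Ky Fan step delivers the conclusion immediately.
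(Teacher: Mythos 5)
Your proof is correct and follows essentially the same route as the paper's: substitute $Z_\xi' = Z_\theta$ under the stated assumptions, use the symmetric-idempotent property of the rank-$k$ projection to collapse the loss to $\operatorname{tr}\bigl((I-P)\,\mathbb{E}[Z_\theta^\top Z_\theta]\bigr)$, and conclude via the variational characterization of $k$-PCA. The only difference is that you make explicit the Ky Fan / Courant--Fischer step (and the writing $P = UU^\top$) that the paper leaves implicit, which if anything tightens the argument; the stray remark that $I-P$ is a projection ``of rank $k$'' is an immaterial slip since only its idempotency is used.
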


A linear optimal predictor $P^{*}_\theta$ in that restricted case performs $\operatorname{rank} P^{*}_\theta$-PCA on the covariance of latents, in particular finding mutually orthogonal eigenvectors. At the end of training, the predictor becomes full rank. This refines our understanding of the optimal linear predictor as an orthogonal projection, by characterizing the directions of its span.

\subsection{Decorrelation of linear encoder features}

The orthogonality of directions learned by the linear predictor extends to linear encoder features. At the end of training, the linear predictor is the identity (Proposition \ref{predictoridentity}). This fact can be leveraged thanks to the \emph{balancing relationship} \cite{Tian2021UnderstandingSL}, valid in the presence of a small weight decay parameter $\lambda$:
\begin{equation}\label{balancing}
A_{\theta,t} A_{\theta,t}^{\top} = P_{\theta,t}^{\top} P_{\theta,t} + M_0 e^{-2\lambda t}
\end{equation}
where $A_{\theta,t}$ and $P_{\theta,t}$ are now indexed by time $t$, and $M_0$ is a constant matrix determined at initialization. At the end of training the $M_0$ term has decayed exponentially and became negligible. In particular, since $P_{\theta,t}=I_f$ by then, we get that at the end of training, $A_{\theta,t} A_{\theta,t}^{\top} \rightarrow I_f$; encoder features are therefore decorrelated.

\subsection{Covariance square-root predictors} A method for performing non-contrastive self-supervised learning is to use a predictor that directly computes a \emph{matrix square root} of the covariance matrix of embeddings. This reflects the decorrelation of features seen above and was suggested in prior work, either using eigendecomposition \citep{Tian2021UnderstandingSL} or Cholesky factorization towards optimal whitening \citep{Ermolov2021}. It can be explained simply in our setting. Recall our batchwise NE predictor (Equation \ref{neumann}) can be, when neglecting higher-order terms, made directly proportional to the covariance of online and target features, $ P_\theta = Z_{\theta}^{\top} Z_{\xi}^{\prime} $. Since for an optimal, orthogonal projection, predictor $ P_\theta^2 = P_\theta$, we get  $ P_\theta^2 = Z_{\theta}^{\top} Z_{\xi}^{\prime}$, justifying using a square root of the covariance of features as an alternative predictor. We show how to do this efficiently.

\textbf{Approximate matrix square-root.}  Exact matrix square root methods through eigen- or Cholesky decomposition are computationally costly, potentially unstable, and do not exploit that the linear predictor only needs to be approximately optimal, as experiments in Appendix I of \citet{Grill2020} show. We therefore propose to use approximate matrix square root iterations, applied to the empirical covariance of \emph{online} latents $\Sigma = 1/b \cdot (Z_\theta^\top Z_\theta)$, \emph{in lieu} of a trainable linear predictor. Given batchwise covariance matrix $\Sigma$, these methods approach $\Sigma^{1/2}$, defined as $(\Sigma^{1/2})^{\top} \Sigma^{1/2} = \Sigma$, as iterations increase. We choose those iterations to be inverse-free, and involve exclusively matrix products and accumulation, so as to better leverage hardware accelerators, and be numerically stable. 

\textbf{Visser and Newton-Schulz iterations.} The two iterations we consider are the \emph{Visser} iteration \citep{Higham2008FunctionsOM} and the \emph{Newton-Schulz} iteration \citep{Schulz1933IterativeBD, Denman1976TheMS, Higham2008FunctionsOM}. We detail them below.
\begin{restatable}{prop}{propsix} \emph{(Visser)} \label{visser}
Set a maximum number $n$ of iterations, and a positive step size $\eta$. For each online latents batch $Z_\theta$, compute the matrix $\Sigma= \frac{1}{b}Z_\theta^{\top} Z_\theta$, and set $P_0 = \frac{1}{2 \eta} I_f$. Then repeat $n$ times the following \emph{Visser} iteration for the predictor $P_k$:
\vspace{-0.5em}
\begin{equation}
    P_{k+1} = P_k + \eta \cdot \big( \Sigma - P_{k}^2 \big)
\end{equation}
Then $P_k \underset{k}{\rightarrow} \Sigma^{1/2}= \frac{1}{\sqrt{b}}( Z_\theta^{\top} Z_\theta)^{1/2}$. Use $P_\theta:=P_n$ as batchwise predictor. We name this the \textbf{Visser} predictor.
\end{restatable}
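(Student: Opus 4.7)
The plan is to exploit the classical fact that when the initial iterate commutes with $\Sigma$, the entire matrix iteration decouples into scalar iterations along a joint eigenbasis. First I would establish by induction that every $P_k$ commutes with $\Sigma$: $P_0 = \frac{1}{2\eta} I_f$ commutes trivially, and if $P_k \Sigma = \Sigma P_k$ then $P_{k+1} = P_k + \eta(\Sigma - P_k^2)$ does too since $\Sigma$ commutes with $\Sigma$, $P_k$, and $P_k^2$. Since $\Sigma = \frac{1}{b}Z_\theta^\top Z_\theta$ is symmetric PSD, diagonalize $\Sigma = U D U^\top$ with $D = \operatorname{diag}(\sigma_1,\dots,\sigma_f)$, $\sigma_i \ge 0$. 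Commutativity plus symmetry of $P_k$ (also preserved by induction) gives $P_k = U D_k U^\top$ with $D_k$ diagonal, so the matrix recursion reduces to $f$ independent scalar recursions
\begin{equation*}
d^{(i)}_{k+1} = d^{(i)}_k + \eta \bigl(\sigma_i - (d^{(i)}_k)^2\bigr), \qquad d^{(i)}_0 = \tfrac{1}{2\eta}.
\end{equation*}

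Next I would analyze each scalar iteration $d_{k+1} = d_k + \eta(\sigma - d_k^2)$ toward its intended fixed point $\sqrt{\sigma}$. Writing $e_k = d_k - \sqrt{\sigma}$, one gets $e_{k+1} = e_k\bigl(1 - 2\eta\sqrt{\sigma} - \eta e_k\bigr)$, so local linear contraction at the fixed point with rate $|1 - 2\eta\sqrt{\sigma}|$ holds as long as $\eta < 1/\sqrt{\sigma}$. For global convergence from $d_0 = 1/(2\eta)$, I would check monotonicity: if $\eta \le 1/(2\vertiii{\Sigma^{1/2}})$, then $d_0 \ge \sqrt{\sigma}$ for all eigenvalues, the map $d \mapsto d + \eta(\sigma - d^2)$ is monotone on $[\sqrt{\sigma}, d_0]$ and maps this interval into itself, hence $d_k$ decreases monotonically to $\sqrt{\sigma}$. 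Collecting these per-eigenvalue convergences, $D_k \to D^{1/2}$, so $P_k \to U D^{1/2} U^\top = \Sigma^{1/2}$, as claimed.

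The main obstacle is the step-size condition: the statement only says "positive step size $\eta$," but convergence across every eigendirection requires $\eta$ to be small relative to $1/\vertiii{\Sigma^{1/2}}$. I would therefore make this explicit by assuming $\eta \le 1/(2\vertiii{\Sigma}^{1/2})$ (a standard sufficient condition for the Visser iteration on PSD matrices), which simultaneously guarantees both the monotone-descent argument above and the local contraction rate $|1 - 2\eta\sqrt{\sigma_i}| < 1$ for every $i$. A minor subtlety to note is the zero-eigenvalue case $\sigma_i = 0$: the scalar recursion becomes $d_{k+1} = d_k(1 - \eta d_k)$, which with $d_0 = 1/(2\eta)$ decreases to $0 = \sqrt{0}$, so the degenerate direction causes no issue.
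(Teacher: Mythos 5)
Your proof is correct, and it supplies something the paper itself does not: the paper gives no proof of this proposition at all, deferring convergence of the Visser iteration to the cited reference \citet{Higham2008FunctionsOM}. Your spectral-decoupling argument (iterates commute with $\Sigma$, reduce to scalar recursions $d_{k+1}=d_k+\eta(\sigma_i-d_k^2)$, then monotone convergence from $d_0=\tfrac{1}{2\eta}\ge\sqrt{\sigma_i}$ to the fixed point $\sqrt{\sigma_i}$, with the zero-eigenvalue case handled separately) is the standard elementary route and is sound; the error recursion $e_{k+1}=e_k(1-2\eta\sqrt{\sigma}-\eta e_k)$ and the invariance of $[\sqrt{\sigma},d_0]$ under the update when $\eta\le 1/(2\vertiii{\Sigma}^{1/2})$ both check out. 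Two remarks. First, your added step-size hypothesis is not pedantry but necessary: as stated ("a positive step size $\eta$") the proposition is false, since for $\eta\vertiii{\Sigma}^{1/2}>1$ the fixed point $\sqrt{\sigma_{\max}}$ is repelling ($|1-2\eta\sqrt{\sigma_{\max}}|>1$) and the scalar iterate started at $1/(2\eta)$ can overshoot, turn negative, and diverge; the paper implicitly sidesteps this by citing Higham and by using $\eta=0.001$ in practice, so making the condition explicit strengthens the statement rather than weakening your proof. Second, a small tightening of the diagonalization step: "symmetric and commutes with $\Sigma$" only gives block-diagonality on repeated eigenspaces of $\Sigma$, so to get a single basis $U$ diagonalizing every $P_k$ it is cleanest to observe by the same induction that each $P_k$ is a polynomial in $\Sigma$ (since $P_0$ is a multiple of $I_f$ and the update only involves $P_k$ and $\Sigma$), from which $P_k=U\,p_k(D)\,U^\top$ follows immediately and the rest of your argument goes through unchanged.
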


In practice, the Visser predictor needs several dozens iterations to achieve good results, and requires choosing an additional step size parameter $\eta$. By contrast, the \emph{Newton-Schulz} iteration we next present does not have this requirement, and provides faster convergence as a second-order method.

\begin{restatable}{prop}{propseven} \emph{(Newton-Schulz)}\label{newtonschulz}
Set a maximum number $n$ of iterations. For each online latents batch $Z_\theta$, first set $\Sigma = \frac{1}{b} Z_{\theta}^{\top} Z_\theta, \quad A_0= \Sigma / \left\|\Sigma\right\|_F, \quad B_0 = I_f$, and then repeat $n$ times the coupled matrix iteration in $(A_k, B_k)$
\vspace{-0.5em}
\begin{empheq}[left=\empheqlbrace]{align}\label{nssystem}
A_{k+1} &= 1/2 \cdot A_{k} \big( 3I_f - B_k A_k \big) \\
B_{k+1} &= 1/2 \cdot \big( 3I_f - B_k A_k \big) B_{k}
\end{empheq}
Then $A_k \underset{k}{\rightarrow} A_{0}^{1/2} = (Z_{\theta}^{\top} Z_\theta)^{1/2} / \sqrt{b \left\|\Sigma\right\|_F}$. \\
Use $P_\theta:=A_n \cdot \sqrt{\left\|\Sigma\right\|_F}$ as batchwise predictor. \\ 
We name this the \textbf{Newton-Schulz} (\textbf{NS}) predictor.
\end{restatable}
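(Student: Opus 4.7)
The plan is to exploit the high symmetry of the coupled iteration: reduce it to scalar iterations on eigenvalues via commutativity, analyze convergence of those scalar iterations, and then use a conserved matrix quantity to recover $A_0^{1/2}$ itself.

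First, I would establish by induction on $k$ that $A_k$ and $B_k$ are symmetric, positive semi-definite, and commute with each other (and with $A_0$). The base case is immediate since $A_0 = \Sigma/\|\Sigma\|_F$ is symmetric PSD and $B_0 = I_f$; inductively each update is a polynomial in $A_k$ and $B_k$, so $A_{k+1}, B_{k+1}$ remain in the commutative, symmetric algebra $\mathbb{R}[A_0]$, and positivity is preserved since the factor $3I_f - M_k$ (with $M_k := B_k A_k$) will be shown to be positive at each step. A short computation using commutativity then gives $A_{k+1} B_{k+1}^{-1} = A_k B_k^{-1}$, so the product $A_k B_k^{-1} = A_0$ is invariant along the iteration; equivalently $A_k = A_0 B_k$ for all $k$.

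Second, commutativity yields the closed recurrence $M_{k+1} = \tfrac{1}{4}(3 I_f - M_k)^2 M_k$ with $M_0 = A_0$. Since $\vertiii{\Sigma} \le \|\Sigma\|_F$, the eigenvalues of $A_0$ lie in $[0, 1]$, so in an orthonormal eigenbasis of $A_0$ the iteration decouples into independent scalar iterations $\mu_{k+1} = \tfrac{1}{4}(3 - \mu_k)^2 \mu_k$. The map $g(\mu) = \tfrac{1}{4}(3-\mu)^2\mu$ is increasing on $[0,1]$ with $g(0)=0$, $g(1)=1$, so eigenvalues stay in $[0,1]$ (justifying positivity of $3I_f - M_k$ used above). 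Writing $\epsilon_k = 1 - \mu_k$, an expansion gives $\epsilon_{k+1} = \tfrac{3}{4}\epsilon_k^2 + \tfrac{1}{4}\epsilon_k^3 \le \epsilon_k^2$ for $\epsilon_k \in [0,1)$, so the map is a quadratic contraction on $(0, 1]$ with unique fixed point $\mu = 1$, while zero eigenvalues remain at $0$. Hence $M_k$ converges to the orthogonal projector onto $\operatorname{range}(\Sigma)$.

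Third, I recover the square root via the invariant $A_k = A_0 B_k$. Assume first that $\Sigma$ is strictly positive definite; then $B_k$ stays invertible, $M_k = A_0 B_k^2 \to I_f$ forces $B_k^2 \to A_0^{-1}$, and symmetry plus positivity single out the unique PSD root $B_k \to A_0^{-1/2}$. Hence $A_k = A_0 B_k \to A_0^{1/2}$, and rescaling gives $P_\theta = A_n \sqrt{\|\Sigma\|_F} \to \Sigma^{1/2}$, as claimed, with quadratic rate. The main obstacle I anticipate is cleanly handling rank-deficient $\Sigma$: $B_k$ fails to be invertible on $\ker \Sigma$ and the conserved-quantity argument needs either a vanishing regularization $\Sigma + \delta I_f$ with $\delta \to 0^+$, or a careful restriction of the analysis to $\operatorname{range}(\Sigma)$ combined with the observation that the kernel contributions to $A_k$ stay at zero. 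Beyond that subtlety, the proof reduces to a routine induction plus the elementary scalar analysis of the cubic $g$ near $\mu = 1$.
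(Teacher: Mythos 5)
Your proof is correct, and it supplies an argument the paper itself omits: the paper gives no proof of this proposition and instead defers the convergence properties of the Visser and Newton--Schulz iterations to \citet{Higham2008FunctionsOM}. Your route --- showing by induction that $A_k,B_k$ remain in the commutative symmetric algebra $\mathbb{R}[A_0]$, deriving the closed recurrence $M_{k+1}=\tfrac14(3I_f-M_k)^2M_k$ for $M_k=B_kA_k$, reducing to the scalar map $g(\mu)=\tfrac14(3-\mu)^2\mu$ on $[0,1]$ (applicable because $\vertiii{\Sigma}\le\|\Sigma\|_F$ places the spectrum of $A_0$ in $[0,1]$), and recovering the limit through the invariant $A_k=A_0B_k$ --- is essentially the standard textbook analysis, made self-contained; it additionally makes explicit the quadratic rate via $\epsilon_{k+1}=\tfrac34\epsilon_k^2+\tfrac14\epsilon_k^3$ and clarifies why the Frobenius normalization of $A_0$ guarantees convergence.

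Two small corrections to your closing paragraph. First, $B_k$ never fails to be invertible: it is the product of $B_0=I_f$ with the factors $\tfrac12(3I_f-M_j)$, each of which has spectrum in $[1,\tfrac32]$; on $\ker\Sigma$ the corresponding eigenvalue of $B_k$ grows like $(3/2)^k$, so what breaks there is the existence of the limit of $B_k$, not its invertibility. Second, no regularization $\Sigma+\delta I_f$ is needed for the claim actually stated: in an eigenbasis of $A_0$ with eigenvalue $\lambda$, the invariant gives $a_k=\lambda b_k$, hence $a_k^2=\lambda m_k$; for $\lambda>0$ one has $m_k\to1$ and $a_k\ge0$, so $a_k\to\sqrt{\lambda}$, while for $\lambda=0$ one has $a_k=0$ for all $k$. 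Thus $A_k\to A_0^{1/2}$ holds verbatim for rank-deficient $\Sigma$; it is only the auxiliary remark that $B_k$ converges to a multiple of $\Sigma^{-1/2}$ (used later for the Stiefel predictor) that requires positive definiteness or a pseudoinverse reading.
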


We also note that iterate $B_k$ converges to a multiple of $\Sigma^{-1/2}$, which will be useful later.

This iteration only requires $3\cdot n$ matrix multiplications. Moreover, as a second-order method, it is robust to sensible choices of $n $. \citet{Higham2008FunctionsOM} discusses the theoretical and convergence properties of the two iterations in Propositions \ref{visser} and \ref{newtonschulz}. In our self-supervised learning context, these closed-form predictors recover and can even exceed the performance of a linear trainable predictor (cf. Section~\ref{resultssection}).

\textbf{$\text{NS}^2$}. Finally, given the projection relationship $P^2 \approx P$ for the approximately optimal linear predictor also implies that $P \approx P^{1/2} \approx P^{1/4} \approx \dots$, consistently with findings in \citet{Wang2021TowardsDR}, one can chain two successive applications of the Newton-Schulz iteration and use this as a predictor, now computing an approximation to $(Z_{\theta}^{\top} Z_\theta)^{1/4}$. In practice, we found this provides small performance gains. We call this method $\text{NS}^2$, and evaluate it empirically in Results section \ref{resultssection}. We however only chain Newton-Schulz iterations since we found that the chaining of two Visser iterations is prone to numerical collapse.

\section{A Riemannian interpretation of \byol}

This section views \byol and its induced orthogonality from a Riemannian geometry perspective. The optimal linear predictor is orthogonal, and can hence be seen as undergoing optimization on a \emph{Stiefel} manifold of orthogonal matrices, whose definition we recall below. We already know that conditionally on its rank $k$ and the latents, the optimal predictor solves a $k$-PCA problem (Proposition \ref{prop-kpca}). In the special case $k=1$, \byol can be seen as an exact instance of a $1$-PCA method called \emph{Krasulina}'s method. Extending this equivalence to $k \geq 1$ is possible with Riemannian interpretation. We show how orthogonality constraints in the predictor emerge from the architectural combination of symmetry and linear regression, corresponding to the notion of \emph{retraction} in Riemannian SGD. It also corresponds to the addition of negative gradients, precluding training collapse. We also derive another predictor form from those insights.

\subsection{$1$-PCA}

\textbf{Connection with Oja's and Krasulina's methods.} When trying to perform whitening or PCA, for large datasets, the sample covariance of data might differ from the true, population covariance; also, data may only be available incrementally. Algorithms have been devised to deal with this streaming setting. \emph{Oja's rule}~\citep{Oja1982SimplifiedNM, Oja1992PrincipalCM}, maintains an estimate $p_t$ of the \emph{leading} eigenvector of the covariance $\mathbb{E}[X^{\top} X]$, as given a realization of single random row vector $X$, it alternates between a gradient ascent update towards $X^{\top} X$ and a normalization update for $p_t$ according to
\vspace{-0.5em}
\begin{equation}\label{defn_oja}
p_{t} \leftarrow p_{t-1}+\eta\left( X^{\top} X p_{t-1} \right) \text { and } p_{t} \leftarrow \frac{p_{t}}{\left\|p_{t}\right\|}
\end{equation}
\emph{Krasulina's method}~\citep{Krasulina1969TheMO} performs a similar update, that folds $w_t$ normalization into an additional term:
\vspace{-0.5em}
\begin{equation}\label{defn_krasulina}
\small
p_{t} \leftarrow p_{t-1}+\eta\left(X^{\top} X p_{t-1 }- p_{t-1}\left(X \frac{p_{t-1}}{\left\|p_{t-1}\right\|}\right)^{2} \right)
\end{equation}
Both methods are known to perform $1$-PCA in the limit, and their convergence properties and rates are well studied~\citep{Shamir2014ASP, Jain2016StreamingPM}. In the \byol case, this gives us an analogy when the linear predictor is constrained to be exactly of rank $1$, showing it learns the leading eigendirection first:

\begin{restatable}{prop}{propeight}\label{krasulinaequiv}
Assume no stochastic augmentations or EMA operation ($\tau=1, \xi = \operatorname{sg}{\theta}$). The \byol predictor optimization loop conditional on latents, when constrained to operate with a rank-$1$ predictor, performs exactly Krasulina's update.
\end{restatable}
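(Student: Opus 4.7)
\emph{Proof plan.} The approach is to parameterize the rank-one predictor and show that a gradient-descent step on the \byol loss coincides with Krasulina's update on this parameter. By Proposition~\ref{prop:one} the batchwise-optimal linear predictor is an orthogonal projection, so restricting to rank one forces $P = pp^\top/\|p\|^2$ for some $p\in\mathbb{R}^f\setminus\{0\}$. Under $\tau=1$, $\xi=\operatorname{sg}(\theta)$, and no stochastic augmentations, both branches receive identical inputs, so $Z_\xi^{\prime}=Z_\theta =: Z$; conditioning on the latents means $Z$ is held fixed throughout the inner predictor optimization.

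First I would expand the \byol loss~(\ref{byolobjective}) in terms of $p$. Using $P^\top=P$ and $P^2=P$, elementary algebra reduces $\|ZP-Z\|_F^2$ to
\begin{equation*}
L(p) \;=\; \operatorname{tr}(Z^\top Z) \;-\; \frac{p^\top Z^\top Z\, p}{\|p\|^2},
\end{equation*}
namely the negative Rayleigh quotient of $Z^\top Z$ up to an additive constant. This already confirms the $k=1$ case of Proposition~\ref{prop-kpca}: minimizing $L$ aligns $p$ with the leading eigenvector of the latent covariance.

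Next I would differentiate via the quotient rule, obtaining
\begin{equation*}
\nabla_p L \;=\; -\frac{2}{\|p\|^2}\left(Z^\top Z\, p \;-\; p\cdot \Big\|Z\tfrac{p}{\|p\|}\Big\|^2\right),
\end{equation*}
so that one gradient step with learning rate $\eta'$ reads
\begin{equation*}
p \;\leftarrow\; p \;+\; \frac{2\eta'}{\|p\|^2}\left(Z^\top Z\, p \;-\; p\cdot \Big\|Z\tfrac{p}{\|p\|}\Big\|^2\right).
\end{equation*}
Absorbing the scalar $2/\|p\|^2$ into an effective step size $\eta$, this is exactly Krasulina's rule~(\ref{defn_krasulina}), with the streaming row-vector sample $X$ promoted to the batch matrix $Z$ (so $X^\top X$ and $(Xp/\|p\|)^2$ become $Z^\top Z$ and $\|Zp/\|p\|\|^2$ respectively).

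The real subtlety, rather than any nontrivial calculation, lies in the reparameterization: since $P=pp^\top/\|p\|^2$ is scale-invariant in $p$, the predictor dynamics are genuinely those of a direction on the unit sphere of $\mathbb{R}^f$, and the step-size rescaling by $2/\|p\|^2$ is precisely what identifies the two rules without altering the trajectory of $P$. The $-p\cdot\|Zp/\|p\|\|^2$ term in Krasulina's update then appears naturally as the radial correction enforcing this scale-invariance, playing the same role as the explicit renormalization step in Oja's rule~(\ref{defn_oja}).
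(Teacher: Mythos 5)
Your proposal is correct and follows essentially the same route as the paper's own proof: parametrize the rank-one predictor as $P = p^{\top}p/\|p\|^{2}$, write the \byol loss conditional on latents, and identify one gradient step in $p$ with Krasulina's rule (with $X^{\top}X$ replaced by the batch covariance $Z^{\top}Z$). Your version is simply more explicit, spelling out the Rayleigh-quotient form of the loss and the absorption of the $2/\|p\|^{2}$ factor, which the paper handles by taking $p$ to be $L^2$-normalized without loss of generality.
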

\vspace{-0.5em}
Proposition \ref{krasulinaequiv} strengthens the Proposition \ref{prop-kpca} in the special case $k=1$. The case $k=1$ is illuminating but begs for generalization to larger ranks. We now proceed to extend this to $k \geq 1$.

\subsection{A Riemannian interpretation}
The Oja and Krasulina updates can be adapted to $k \geq 1$, now maintaining a rank $k$ matrix estimate $P_t$, provided the explicit $p_t$ vector normalization steps in Equation \ref{defn_oja} are replaced with an \emph{orthonormalization step} for matrix $P_t$. This can be achieved by various means, including taking the orthogonal part in QR factorization of $P_t$ after each update, and ensures the procedure does not collapse. \citet{Tang2019ExponentiallyCS} for instance propose a matrix version of the Krasulina procedure this way.

\textbf{Riemannian SGD.} This general procedure of taking gradient updates towards an empirical covariance matrix, interleaved with orthonormalization steps, can be interpreted in another way. If it is acceptable to make the orthonormalization procedure approximate, we can see this as an instance of stochastic \emph{Riemannian gradient ascent} \citep{Bonnabel2013StochasticGD} as follows. Enforcing orthonormality constraints in matrix optimization is equivalent to solving an unconstrained problem over a \emph{Stiefel} matrix manifold \citep{Edelman1998TheGO, Absil2007OptimizationAO} defined for integers $(n,p)$, $p \leq n$ as the manifold $\operatorname{St}(n, p)$, of matrices $M$ such that $\left\{M \in \mathbb{R}^{n \times p}: M^{\top} M=I_p \right\}$. A point on the manifold is thus a set of $p$ orthonormal vectors in $\mathbb{R}^{n}$ (a $p$-frame) and can be seen as the given of a dimension $p$ subspace, as well as a choice of its basis vectors.

Riemannian gradient ascent proceeds by taking gradient steps, projected in the tangent space of the manifold at each step. However, because the gradient iterates remain in the tangent plane as they neglect curvature terms, they need to be systematically taken back (arbitrarily close) to the manifold itself through an operation called \emph{retraction}, $\mathcal{R}$, after each step. This results in a sequence of matrix iterates $(M_k)$ of the form, given step size $\eta$, and gradients of function $F$, 
\begin{equation}\label{retraction}
    M_{k+1} = \mathcal{R} \big(M_k, \eta \operatorname{Grad} F(M_k) \big)
\end{equation}
Retractions are generally computationally expensive matrix functions \citep{Ablin2022FastAA}, and a large part of matrix manifold optimization consists in devising efficient retraction algorithms~\citep{Absil2007OptimizationAO}.

\textbf{EMA and linear predictor as trivial retraction.} Recall that the optimal linear predictor in \byol is an orthogonal projection (Equation \ref{projectororthogonalonbatch}). We link this to the \byol architecture with elementary arguments. A projection is an idempotent linear application $P$, such that $P^2 - P = 0$, and an orthogonal projection satisfies $P P^{\top} - P = 0$, so a symmetric ($P = P^\top$) projection is also orthogonal. These properties can be made approximate given some scalar $\varepsilon \geq 0$; if defining an $\varepsilon$-\emph{quasi}-projection as satisifying $\vertiii{P^2 - P} \leq \varepsilon$, an $\varepsilon$-quasi-symmetry $\vertiii{P^{\top}-P} \leq \varepsilon$, and an $\varepsilon$-\emph{quasi}-orthogonal projection $\vertiii{P P^{\top} -P} \leq \varepsilon$. An $\varepsilon$-symmetric $\varepsilon$-projection is thus a ($2 \cdot )\varepsilon$-orthogonal projection. 

We now observe crucially that a close-to-optimal linear regression predictor in \byol should be close to a projection, and an exponential moving average operation on the representations with $\tau \simeq 1$ should yield a quasi-symmetric predictor. Importantly the EMA update step is done after the stop-gradient operation. The interleaved gradient updates-EMA steps in \byol are thus analogous to gradient updates-retraction steps in Riemannian SGD. Hence,

\begin{centering}
A $\underbrace{\text{quasi-symmetric}}_{\text{tight EMA}}$ $\underbrace{\text{quasi-projection}}_{\text{linear regression}}$ is $\underbrace{\text{quasi-orthogonal}}_{\text{Stiefel retraction}}$. \\
\end{centering}
\vspace{0.5em}

i.e. the coupling of the EMA and predictor make them an approximate orthonormalization step for the latents; \textbf{given a close to optimal linear predictor, the additional EMA update makes it a computationally free Stiefel retraction, on $\operatorname{St}(f, \operatorname{rank} P)$.}

\begin{figure}[ht]
    \centering
    \includegraphics[keepaspectratio,width=.45\textwidth]{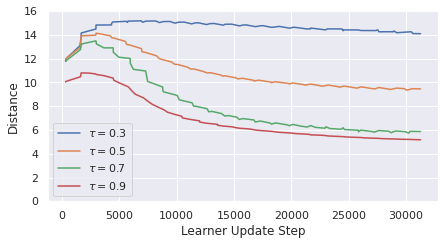}
    \caption{Distance of the linear predictor to its polar factor, as a function of the EMA parameter $\tau$ in \byol. Smaller EMA values tend to result in higher terminal distances. ImageNet, $100$ epochs.}
    \label{fig:EMAandpolar}
\end{figure}

We empirically verify this analogy by observing (Figure \ref{fig:EMAandpolar}) that when the EMA parameter in \byol is decreased (down to $0$), then at the end of training, the distance between the linear predictor $P$ and its unitary, polar factor $U$ in polar decomposition $P = U H$, fails to decrease. The EMA operation enables positioning close to the Stiefel manifold early on in training. Our view thus replaces and refines the interpretation of \byol and \simsiam as expectation-maximization algorithms~\citep{SimSIAM}.

\textbf{Towards a family of orthonormalizing SSL methods.} This analogy would also enable us to replace the joint operation of a learnable predictor and an EMA in \byol with any known closed-form Stiefel retraction, or outright projection, operating on the covariance of latents, deriving multiple self-predictive unsupervised learning methods at will. This uncovers a dimension of trading off complexity for number of learning epochs at given terminal performance, opening possibilities for leaner, sample-efficient self-supervised learning. We propose the simplest of these methods below.

\textbf{Stiefel projection.} For any matrix $M$ (square or not) it is known that the matrix $(M^\top M)^{-1/2} M^\top$ represents the projection of $M$ on the Stiefel manifold. We can use an approximation to this expression instead of a retraction, applied to the covariance of online features to orthogonalize it, as a closed-form predictor. Since the sequence of matrices $(B_k)$ in the Newton-Schulz iteration (Proposition \ref{newtonschulz}) allows estimating the \emph{inverse} matrix square root $\Sigma^{-1/2}$, we have:

\begin{restatable}{prop}{propten}\label{stiefelpredictor}
Set a maximum number $n$ of iterations. For each online latents batch $Z_\theta$, first compute the matrix $\Sigma=\frac{1}{b}Z_\theta^{\top} Z_\theta$. Then compute an estimate to $(\Sigma^\top \Sigma)^{-1/2} \Sigma^\top$ using $B_n$ after applying $n$ Newton-Schulz iterations to $(\Sigma^{\top} \Sigma)$, as per Proposition \ref{newtonschulz}. Use $P_n:=B_n \Sigma^{\top} / \left\|B_n\right\|_F $ as a batchwise predictor. We name this the \textbf{Stiefel} predictor.
\end{restatable}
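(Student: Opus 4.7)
The plan is to verify that the output $P_n := B_n \Sigma^{\top}/\|B_n\|_F$ approximates the quoted Stiefel projection formula $(\Sigma^{\top}\Sigma)^{-1/2}\Sigma^{\top}$. The proposition then follows essentially as a direct corollary of Proposition \ref{newtonschulz} applied to the symmetric positive semidefinite matrix $M := \Sigma^{\top}\Sigma$, combined with the observation made earlier in the paper that the overall scale of the predictor is irrelevant under the normalized-$L^2$ \byol loss.

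First, I would re-apply Proposition \ref{newtonschulz} with $M$ in place of the original input: initializing $A_0 = M/\|M\|_F$ and $B_0 = I_f$, the remark immediately following that proposition ensures $B_k \to (M/\|M\|_F)^{-1/2} = \|M\|_F^{1/2}\,M^{-1/2}$, where the inverse square root is taken on the range of $M$. The convergence argument is the standard one for coupled Newton--Schulz: an induction shows that $A_k$ and $B_k$ remain simultaneously diagonalizable in the eigenbasis of $A_0$, and the coupled eigenvalue recurrence $a_{k+1} = \tfrac{1}{2}a_k(3 - a_k b_k)$, $b_{k+1} = \tfrac{1}{2}(3 - a_k b_k)b_k$ converges quadratically to $(\sqrt{a_0},\,1/\sqrt{a_0})$, since the initial spectrum of $A_0$ lies in $[0,1] \subset (0,3)$ by the Frobenius normalization.

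Multiplying on the right by $\Sigma^{\top}$ then gives $B_n \Sigma^{\top} \to \|M\|_F^{1/2}\,(\Sigma^{\top}\Sigma)^{-1/2}\Sigma^{\top}$, so $B_n \Sigma^{\top}$ already equals the Stiefel projection up to the scalar factor $\|M\|_F^{1/2}$. Dividing by $\|B_n\|_F$ replaces this by $\|M\|_F^{1/2}/\|B_n\|_F \to 1/\|(\Sigma^{\top}\Sigma)^{-1/2}\|_F$, a strictly positive constant; since the paper notes, before Proposition \ref{predictoridentity}, that positive rescaling of a linear predictor leaves the normalized-$L^2$ \byol objective invariant, this leftover scalar is immaterial, and $P_n$ acts as the claimed Stiefel-projection predictor.

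The main technical obstacle is rank deficiency: whenever the batch has $b < f$, $M$ is singular and the NS iteration has a null space of fixed points on which $B_n$ does not converge to an inverse square root. I would handle this by splitting the argument along the orthogonal decomposition $\operatorname{range}(\Sigma) \oplus \operatorname{ker}(\Sigma)$, taking pseudo-inverses in place of inverses on the range, and observing that $\Sigma^{\top}$ annihilates any contribution from the kernel component of $B_n$, so the product $B_n \Sigma^{\top}$ remains well-defined and matches the intended approximate Stiefel projection even in the singular regime. The $\|B_n\|_F$ rescaling additionally prevents finite-precision blow-up as the spectrum of $\Sigma$ becomes ill-conditioned.
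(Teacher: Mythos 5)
Your proposal matches the paper's own (largely implicit) justification: the paper treats this proposition as a construction that follows directly from Proposition \ref{newtonschulz} applied to $\Sigma^{\top}\Sigma$, the remark that $B_k$ tends to a multiple of the inverse square root, the known Stiefel-projection formula $(M^{\top}M)^{-1/2}M^{\top}$, and the scale-invariance of the normalized \byol loss, which is exactly your argument. Your added eigenvalue-recurrence convergence sketch and the range/kernel treatment of the singular case (where, to be precise, the kernel entries of $B_k$ grow like $(3/2)^k$ rather than staying fixed, but are indeed annihilated by $\Sigma^{\top}$) only supply detail the paper delegates to \citet{Higham2008FunctionsOM}.
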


\begin{figure*}
     \centering
     \begin{subfigure}[t]{0.32\textwidth}
        \centering
        \includegraphics[width=\textwidth]{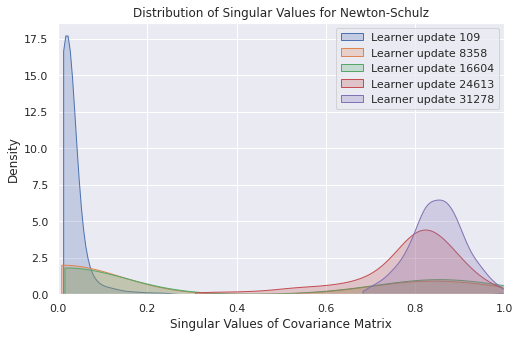}
        \caption{Empirical distribution of the singular values of the covariance of features, for Newton-Schulz, throughout $100$ epochs.}
        \label{fig:DSVNS}
    \end{subfigure}
    \hfill
    \begin{subfigure}[t]{0.32\textwidth}
        \centering
        \includegraphics[width=\textwidth]{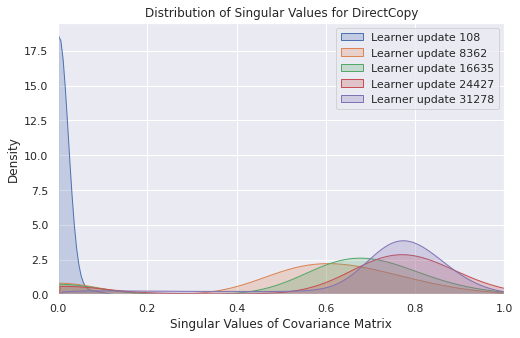}
        \caption{Empirical distribution of the singular values of the covariance of features, for DirectCopy, throughout $100$ epochs.}
        \label{fig:DSVDirectCopy}
    \end{subfigure}
    \hfill
    \begin{subfigure}[t]{0.32\textwidth}
        \includegraphics[width=\textwidth]{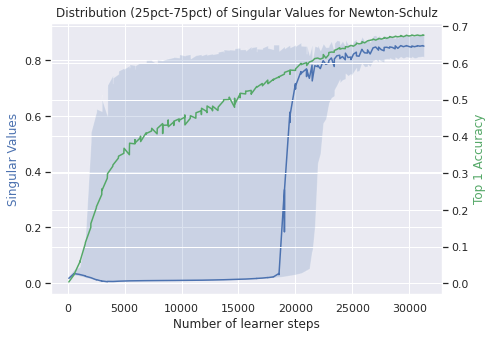}
        \caption{\emph{Blue:} Median, $25$th and $75$th percentile of empirical distribution as in leftmost Figure \ref{fig:DSVNS}. \emph{Green:} Top-$1$ $\%$ linear accuracy.}
        \label{fig:DistribNS}
    \end{subfigure} 
    \caption{Study of the (dynamically renormalized) singular values of the covariance of online latents.}
    \label{fig:distrib_main}
\end{figure*}

\textbf{Riemannian SGD can create negative gradients.} Finally we show how the Riemannian perspective helps explain non-collapse in negative-free SSL. Performing Riemannian gradient descent (for the predictor) is done by first computing \emph{Riemannian} gradients. Those are standard gradients projected onto the tangent plane to the manifold at every iterate. Riemannian gradients on the Stiefel manifold contain an additional term relative to standard gradients. The following proposition is given in \citet{Edelman1998TheGO}:
\begin{restatable}{prop}{propnine}\label{StiefelGradients}
Let $F$ be a function of a matrix $P$ and denote by $F'(P)_{ij}= (\frac{\partial F}{\partial P_{ij}})$ the Jacobian (matrix gradient) of $F$ with respect to $P$. The Riemannian gradient of $F(P)$ on the Stiefel manifold, $\operatorname{Grad} F(P)$, is given by
\begin{equation*}
    \operatorname{Grad} F(P) = F'(P) - P F'(P)^{\top} P.
\end{equation*}
\end{restatable}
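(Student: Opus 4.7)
The plan is to identify the Edelman-canonical Riemannian gradient of $F$ on the Stiefel manifold $\operatorname{St}(n,p) = \{P : P^\top P = I_p\}$, which by definition is the unique tangent vector $G$ such that $\langle G, \Delta\rangle_c = dF(P)[\Delta] = \operatorname{tr}(F'(P)^\top\Delta)$ for every tangent $\Delta$, where $\langle \Delta_1, \Delta_2\rangle_c := \operatorname{tr}(\Delta_1^\top(I_n - \tfrac{1}{2}PP^\top)\Delta_2)$ is the canonical metric of Edelman et al. Differentiating the constraint $P^\top P = I_p$ yields the tangent space $T_P\operatorname{St}(n,p) = \{\Delta \in \mathbb{R}^{n\times p} : P^\top\Delta + \Delta^\top P = 0\}$, i.e., those $\Delta$ for which $P^\top\Delta$ is skew-symmetric.

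I would first verify that the proposed $G := F'(P) - PF'(P)^\top P$ is tangent: setting $M := P^\top F'(P)$ and using $P^\top P = I_p$, one obtains $P^\top G = M - M^\top$, which is manifestly skew. I would then verify the defining identity by picking an orthogonal completion $P_\perp$ and writing any tangent $\Delta = PA + P_\perp K$ with $A$ skew and $K$ free, and similarly decomposing $F'(P) = PM + P_\perp K_0$ with $K_0 := P_\perp^\top F'(P)$. Since $(I_n - \tfrac{1}{2}PP^\top)\Delta = \tfrac{1}{2}PA + P_\perp K$, a direct expansion yields
\begin{equation*}
\langle G, \Delta\rangle_c = \tfrac{1}{2}\operatorname{tr}\!\bigl((M - M^\top)^\top A\bigr) + \operatorname{tr}(K_0^\top K).
\end{equation*}
Invoking $\operatorname{tr}(MA) = -\operatorname{tr}(M^\top A)$ for skew $A$ collapses the first term to $\operatorname{tr}(M^\top A)$, which matches $\operatorname{tr}(F'(P)^\top\Delta) = \operatorname{tr}(M^\top A) + \operatorname{tr}(K_0^\top K)$ exactly. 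Uniqueness of the Riesz representative on $T_P\operatorname{St}(n,p)$ then concludes.

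The main subtlety will be the choice of Riemannian metric: with the ambient Euclidean metric inherited from $\mathbb{R}^{n\times p}$ the gradient would instead read $F'(P) - P\operatorname{sym}(P^\top F'(P))$, so only the canonical metric produces the clean antisymmetrized expression of the proposition. In particular, $F'(P) - G = PF'(P)^\top P$ is \emph{not} canonically normal to the tangent space, so the naive projection-onto-tangent-space intuition is misleading and one must verify the inner-product identity directly. The delicate bookkeeping is the cancellation between the factor of $\tfrac{1}{2}$ in the canonical metric and the antisymmetrization $M - M^\top = 2\operatorname{skew}(M)$ that arises when pairing the tangent piece $PA$ against $G$ — this is the step I would double-check most carefully.
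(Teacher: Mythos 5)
Your proof is correct, but it does more than the paper does: the paper offers no proof of this proposition at all --- it simply quotes the formula from \citet{Edelman1998TheGO}, and no argument for it appears in the appendix. Your verification is exactly the canonical-metric computation that underlies that reference, and it is carried out correctly: the tangency check $P^{\top}G = M - M^{\top}$ with $M = P^{\top}F'(P)$, the decomposition of both $\Delta$ and $F'(P)$ along an orthonormal completion $[P,\,P_\perp]$, the cancellation of the factor $\tfrac{1}{2}$ in $\langle \cdot,\cdot\rangle_c$ against $\operatorname{tr}\bigl((M^{\top}-M)A\bigr) = 2\operatorname{tr}(M^{\top}A)$ for skew $A$, and uniqueness via Riesz representation (legitimate since $I_n - \tfrac{1}{2}PP^{\top}$ is positive definite, so the canonical form is an inner product on the tangent space). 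Your closing caveat is also a genuinely useful clarification that the paper leaves implicit: the statement as written does not name a metric, and the displayed formula $F'(P) - PF'(P)^{\top}P$ is the gradient only for the Edelman canonical metric, whereas the metric induced from the Euclidean embedding would give $F'(P) - P\operatorname{sym}\bigl(P^{\top}F'(P)\bigr)$; likewise your observation that $PF'(P)^{\top}P$ is not canonically normal to the tangent space correctly explains why a bare ``project the gradient'' argument would not suffice. The only implicit hypothesis worth stating explicitly is that $P$ lies exactly on $\operatorname{St}(n,p)$ (i.e.\ $P^{\top}P = I_p$), which is the intended reading even though the paper applies the result to predictors that are only approximately orthogonal.
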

The additional term $- P F'(P)^{\top} P$ can thus possibly play the same role as standard negative terms intervening explicitly in contrastive losses. This helps provide intuition on why orthonormality constraints on the predictor can prevent its collapse, and in turn, any collapse of features due to eigen-alignment.

\textbf{Summary of our closed-form predictors.} For clarity, we show a summary of the batchwise predictors we propose Table \ref{summarypredictors}. We then proceed to evaluate these empirically.

\vspace{-1.0mm}
\begin{table}[!ht]
\skip -0.5em
\centering
\small
\begin{tabular}{l|c}
Closed-form batchwise predictor & Computes $P_\theta$ as \\
\midrule
LRP \citep{Grill2020}   &    $\left(Z_{\theta}^{\top} Z_{\theta}\right)^{-1} Z_{\theta}^{\top}  Z_{\xi}^{\prime}$     \\
DirectPred \citep{Tian2021UnderstandingSL} & $(Z_{\theta}^{\top}Z_{\theta})^{1/2}$ (eigendecomp.) \\
DirectCopy \citep{Wang2021TowardsDR} & $Z_{\theta}^{\top}Z_{\theta}$ \\
\midrule
NE (Prop. \ref{NE})                 & $2 \cdot Z_{\theta}^{\top} Z_{\xi}^{\prime} - Z_{\theta}^{\top} Z_{\theta} Z_{\theta}^{\top} Z_{\xi}^{\prime}$         \\
Visser (fast iteration) (Prop. \ref{visser})              & $(Z_{\theta}^{\top}Z_{\theta})^{1/2}$         \\
NS (fast iteration) (Prop. \ref{newtonschulz})                  &  $(Z_{\theta}^{\top}Z_{\theta})^{1/2}$        \\
Stiefel (fast iteration) (Prop. \ref{stiefelpredictor})    & $(Z_{\theta}^{\top }Z_{\theta}Z_{\theta}^{\top }Z_{\theta})^{-1/2} (Z_{\theta}^{\top}Z_{\theta})$   
\end{tabular}
\skip -0.5em
\caption{\label{summarypredictors}Summary of closed form predictors}
\end{table}

\vspace{-1.0em}
\section{Results}\label{resultssection}

\textbf{Experimental protocol.} In this section we follow the experimental protocol of \byol in \citet{Grill2020}, except we use a linear predictor $P$, taken as a closed-form expression defined batchwise (see Table \ref{summarypredictors}), instead of a trainable one. Importantly, we use an EMA on the predictor to stabilize its numerics. This aside, we use the same ResNet-$50$ encoder with a $2$-layers projector, and optimization settings (including learning rate and weight decay) as in \citet{Grill2020}. We report top-$1$ percentage accuracy on ImageNet-$1$k under linear evaluation protocol, after either $100$ or $300$ epochs, averaged over $3$ random seeds.
More details and hyperparameters are presented in Appendix~\ref{appx:hyper}.

\textbf{Predictor Ridge.} When estimating the matrix predictor through closed-form expressions or iterations, it is possible to add an extra regularization or \emph{ridging} term. It consists in adding $\alpha$ times the identity matrix to the predictor. First, it provides additional numerical stability. Second, it brings the predictor \emph{closer} to the orthogonal manifold, which we have shown to be an important property in the predictor. We report our results with $\alpha$ values between $+0.0$ and $+0.9$ in $+0.15$ increments (Tables \ref{hundredepochs} and \ref{threehundredepochs}).

\textbf{\byol baseline and main results.} The top-$1$ accuracy reached by standard \byol with a linear \emph{trainable} predictor is $67.6\%$ at $100$ epochs and $71.5\%$ at $300$ epochs. Our best results achieved with closed-form predictors (with additional ridge) are $68.8\%$ and $72.2\%$ respectively, with both the NS predictor (in its $\text{NS}^2$ variant) and the Stiefel predictor actually outperforming the linear trainable variant, by $+1.2\%$ and $+0.7\%$ respectively. At $300$ epochs, the performance of the NS and Stiefel predictors remains stable across a broad range of ridge parameters $\alpha$ as soon as $\alpha > 0$, removing the need to tune it precisely. We show our performance at $100$ epochs versus prior art in Table~\ref{hundredvscomp}. We reach the best performance across the board with $68.8\%,$ along with \emph{DirectCopy} for a specific ridge value of $\alpha=0.01$, speaking to the necessity of tuning that parameter very precisely there. 
By contrast, adding a corrective term via our NE predictor is enough to improve performance in a wide $\alpha$ range compared to DirectCopy, and function well even without ridging, or tuning predictor setting frequency. Our $\text{NS}^2$ and Stiefel predictors perform the best both at $100$ and $300$ epochs.
Overall, these positive results of our closed-form predictors empirically validate our analyses for each form of the \byol predictor.

\begin{table}[ht]
\centering
\scalebox{0.85}{
\begin{tabular}{l|cccccccc}
Predictor   & $0.0$ & $0.15$ & $0.3$ & $0.45$ & $0.6$ & $0.75$ & $0.9$  \\
\midrule
$\text{DirectCopy}^{\dagger}$  & $20.9$&  $65.8$& $67.3$&  $67.4$& $67.5$& $67.4$ & $67.4$   \\ \midrule
NE         & $61.2$&  $\underline{68.5}$& $68.3$&  $68.1$& $67.7$&$67.4$  &  $67.0$ \\
Visser & $\underline{68.4}$ & $68.3$ & $68.4$ & $68.4$ & $68.3$ & $68.4$ & $68.3$ \\
NS ($n=9$)        & $66.4$&  $66.4$& $66.7$&  $67.1$& $67.6$& $67.8$ & $\underline{68.0}$  \\
$\text{NS}^{2}$ ($n=7$) & $66.8$ & $68.2$ & $68.6$ & $68.6$ & $68.7$ & $68.7$ & $\underline{\mathbf{68.8}}$ \\
Stiefel ($n=9$) & $66.7$ & $67.7$ & $\underline{\mathbf{68.8}}$ & $68.7$ & $68.7$ & $68.7$ & $68.7$ \\ \bottomrule
\end{tabular}
}
\vspace{-3.0mm}
\caption{\label{hundredepochs}Top-$1$ accuracy on ImageNet-$1$k (linear protocol) of our predictors, at $100$ epochs, as ridge $\alpha$ varies. \emph{DirectCopy} \citep{Wang2021TowardsDR} is our own replication, setting predictor every batch. Best results per predictor underlined, best overall bolded. Average over $3$ random seeds. Standard trainable linear predictor \byol achieves $67.6\%$.}
\end{table}

\vspace{-3.0mm}
\begin{table}[ht]
\centering
\scalebox{0.85}{
\begin{tabular}{l|cccccccc}
Predictor   & $0.0$ & $0.15$ & $0.3$ & $0.45$ & $0.6$ & $0.75$ & $0.9$  \\
\midrule
$\text{DirectCopy}^{\dagger}$  & $48.3$& $71.1$ & $71.4$& $71.3$ & $71.2$& $71.5$ & $71.4$    \\ \midrule
NE         & $68.8$& $\underline{71.8}$&  $71.7$& $71.5$ & $71.3$& $70.9$ &  $70.6$  \\
Visser & $\underline{71.6}$ & $71.5$ & $71.6$ &$71.4$ &$71.5$ &$71.4$ &$71.6$ \\
NS $(n=9)$         &$70.8$ &  $71.7$& $\underline{72.0}$&$72.0$  & $71.9$& $72.0$ & $71.9$   \\
$\text{NS}^{2}$ $(n=7)$ & $70.9$ & $72.0$ & $72.1$ & $72.0$ & $\underline{\mathbf{72.2}}$ & $72.1$ & $72.1$ \\
Stiefel $(n=9)$ & $71.4$ & $72.0$ & $\underline{72.1}$ & $72.1$ & $72.0$ & $72.0$ & $72.1$ \\ \bottomrule
\end{tabular}
}
\caption{\label{threehundredepochs}Top-$1$ accuracy on ImageNet-$1$k (linear protocol) of our predictors, at $300$ epochs, as ridging $\alpha$ varies. Average over $3$ seeds. Standard trainable linear predictor \byol achieves $71.5\%$.}
\vspace{-1.0em}
\end{table}

\begin{table}[!h!b]
\skip -0.5em
\centering
\small
\scalebox{0.875}{
\begin{tabular}{l|c}
Method                                  & Accuracy \\ \midrule
SimSiam \citep{SimSIAM}                 & $68.1$                \\
Barlow Twins \citep{Zbontar2021BarlowTS} & $63.4$                \\
VICReg \citep{Bardes2022VICRegVR}       & $65.3$                \\
\citet{bridgingthegap} (no covariance partitioning) & $61.4$                \\
\citet{bridgingthegap} (covariance partitioning)    & $67.3$                \\
DirectPred \citep{Tian2021UnderstandingSL} & $68.5$                \\
DirectCopy \citep{Wang2021TowardsDR}$, \alpha = 0.01 $             & $\bf{68.8}$                \\
Neural Eigenmap \citep{Deng2022NeuralEA}& $68.4$\\ \midrule
Visser (ours)                 & $68.4$                \\
$\text{NS}^{2}$  (ours)      & $\bf{68.8}$      \\
Stiefel  (ours)      & $\bf{68.8}$      \\\bottomrule        
\end{tabular}
}
\caption{\label{hundredvscomp}Top-$1$ accuracy on ImageNet-$1$k (linear protocol), at $100$ epochs, comparing prior art (with standard, rather than higher capacity, projector network architecture) and our best results.}
\end{table}

\textbf{Performance of square-root iterations.} The Visser, NS, and Stiefel predictors all perform better than the \byol baseline. Consistently with its initialization as a large multiple of the identity matrix, the Visser iteration is insensitive to ridging, and can be used with $\alpha=0$. We also evaluate the $\text{NS}^2$ variant of the Newton-Schulz iteration, chaining two of its successive applications, but with two times $7$ iterations instead of $9$ for the single iteration. We see performance gains from this approach ($68.8\%$ instead of $68.0\%$ at $100$ epochs) and hypothesize that they stem from better pre-conditioning of the covariance of features, yielding richer eigendirections to explore. Finally, the Stiefel predictor shows excellent performance both at $100$ and $300$ epochs.

\textbf{Predictor singular values.} We observe the empirical distribution of singular values of the latents' covariance throughout $100$ epochs of training. We continously normalize those to be between $0$ and $1$, and observe an average of $3$ seeds. We do this across two predictors: Newton-Schulz (Figures \ref{fig:DSVNS} and \ref{fig:DistribNS}), and DirectCopy \citep{Wang2021TowardsDR} (Figure \ref{fig:DSVDirectCopy}). Empirically we observe better separation of eigenvalues with our Newton-Schulz predictor, consistently with the robustness and performance differentials highlighted above. Additional depictions for our other predictors and ablation studies on the number of iterations $n$ for our approximate methods are provided in Appendix~\ref{appx:ablation}. The robustness of the Stiefel predictor and its strong performance across epochs and $\alpha$ makes it the best method to use, in our view.

\section{Related work and conclusion}\label{sec:related_work}

We have sought to present minimum theoretical arguments towards understanding self-predictive unsupervised methods, using as few mathematical assumptions as possible. Previous attempts at such explanation have involved implicit contrastive properties of normalizations later disproved empirically \citep{Richemond2020BYOLWE}, or also relied on strong assumptions such as isotropy and multiplicative EMA \citep{Tian2021UnderstandingSL} and on differential equation tools for their conclusions.

Instead, we focused on understanding the multiple forms of the \byol predictor network, forming a bridge to existing methods (Table \ref{summarypredictors}). The proximity of the linear predictor to an orthogonal projection (with increasing subspace span throughout training, uncovering more eigendirections of the covariance of latents until the identity matrix is reached) is a unifying insight.
This helps explain existing methods with very simple algebraic arguments and allows us to derive new ones.
Therefore we introduce multiple closed-form predictor variants at once, in contrast with prior work 
\citep{Bardes2022VICRegVR, Zbontar2021BarlowTS, bridgingthegap}. Our Riemannian manifold perspective complements that of \citet{Balestriero2022ContrastiveAN} and also enables deriving additional methods at will, through Stiefel retractions. We operationalized our understanding with efficient implementations of our predictors that utilize fast matrix iterations, new to the self-supervised learning context. These closed-form predictors perform strongly, offer better insights on the representations learned by self-supervised methods, and help further explain the importance of the predictor network.

\nocite{latestICLRWang, weng2022investigation, zhang2022zerocl}
\nocite{SpectralNet, Pfau2019SpectralIN, Deng2022NeuralEA}

\clearpage
\bibliography{main}
\bibliographystyle{plainnat}

\clearpage
\onecolumn

\begin{appendix}

\section*{\centering Supplementary Material}

\section{Proof of theoretical results}

\propone*
\begin{proof}
Recall that we have on each batch
\begin{equation}
P^{\star}=\underset{P}{\arg \min } \quad \left\|Z_{\theta} P - Z_{\xi}^{\prime}\right\|_{F}^{2} = \left(Z_{\theta}^{\top} Z_{\theta}\right)^{-1} Z_{\theta}^{\top}  Z_{\xi}^{\prime}
\end{equation}
We can just insert the value of the optimum in the \byol objective (right and left term of equation above) yielding
$$\left\|Z_{\theta} \left(Z_{\theta}^{\top} Z_{\theta}\right)^{-1} Z_{\theta}^{\top}  Z_{\xi}^{\prime} - Z_{\xi}^{\prime} \right\|_{F}^{2} = \left\| \Big(  Z_{\theta} \left(Z_{\theta}^{\top} Z_{\theta}\right)^{-1} Z_{\theta}^{\top} - I_f \Big) Z_{\xi}^{\prime}   \right\|_{F}^{2}$$

Here $Z_{\theta} \left(Z_{\theta}^{\top} Z_{\theta}\right)^{-1} Z_{\theta}^{\top}$ is recognized as the linear, orthogonal projection application on the vector space spanned by the rows of batch $Z_\theta$, which is of dimension at most $\min (b,f)$; we denote this projection by $\operatorname{Proj}_{(Z_\theta)}$. Its complement to the identity is also an orthogonal projection, $\operatorname{Proj}_{(Z_\theta)^{\bot}}$.
Thus for an \emph{optimal} linear predictor, on each batch,
\begin{equation}\label{orthproj}
L_{\text{BYOL}, P^{*}} = \mathbb{E}\Big[ \left\| \operatorname{Proj}_{(Z_\theta)^{\bot}} (Z_{\xi}^{\prime}) \right\|_{F}^{2} \Big]
\end{equation}
\end{proof}
One added benefit of the orthogonal projector representation, see Eq.~\eqref{projectororthogonalonbatch}, is that it enables identifying two sources of noise intervening in \byol thanks to decomposing $Z_{\xi}^{\prime}$. We additionally have the proposition:

\begin{proposition*}\label{prop:EMAasregularizer}
 The \byol loss with optimal linear predictor decomposes into two noise terms: resampling noise and EMA noise.
 \begin{equation*}
L_{\text{BYOL}, P^{*}} = \mathbb{E}\left[ \left\| \operatorname{Proj}_{(Z_\theta)^{\bot}} \big[ \underset{\text{resampling noise}}{(Z_{\xi}^{\prime} - Z_\xi)} + \underset{\text{EMA noise}}{(Z_\xi - Z_\theta)} \big] \right\|_{F}^{2} \right].
\end{equation*}
\end{proposition*}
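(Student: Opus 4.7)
The plan is to start from Proposition~\ref{prop:one}, which gives
\[
L_{\text{BYOL}, P^{*}} = \mathbb{E}\Big[ \left\| \operatorname{Proj}_{(Z_\theta)^{\bot}} (Z_{\xi}^{\prime}) \right\|_{F}^{2} \Big],
\]
and to insert a telescoping decomposition of the argument $Z_{\xi}^{\prime}$. Denoting by $Z_\xi := A_\xi(X)$ the target network applied to the \emph{first} augmented batch (same input $X$ as the online branch, but passed through the target encoder), I would write
\[
Z_{\xi}^{\prime} = (Z_{\xi}^{\prime} - Z_\xi) + (Z_\xi - Z_\theta) + Z_\theta,
\]
and then appeal to linearity of the orthogonal projector to distribute $\operatorname{Proj}_{(Z_\theta)^{\bot}}$ across the three summands.

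The single nontrivial verification is that the last term vanishes, i.e.\ $\operatorname{Proj}_{(Z_\theta)^{\bot}}(Z_\theta) = 0$. This is immediate from the explicit formula already used in the proof of Proposition~\ref{prop:one}: writing $\operatorname{Proj}_{(Z_\theta)^{\bot}} = I_b - Z_\theta(Z_\theta^\top Z_\theta)^{-1} Z_\theta^\top$, one has
\[
\operatorname{Proj}_{(Z_\theta)^{\bot}}(Z_\theta) = Z_\theta - Z_\theta(Z_\theta^\top Z_\theta)^{-1}(Z_\theta^\top Z_\theta) = 0,
\]
since $Z_\theta$ trivially lies in its own column span. Substituting this back and pulling the squared Frobenius norm inside the expectation produces exactly the claimed identity, with $Z_{\xi}^{\prime} - Z_\xi$ arising from the two branches seeing different augmentations (resampling noise) and $Z_\xi - Z_\theta$ arising from the target parameter $\xi$ lagging the online parameter $\theta$ under the EMA update (EMA noise).

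There is essentially no analytic obstacle here; the content of the proposition is interpretive rather than computational. It says that because the optimal linear predictor already absorbs whatever component of $Z_{\xi}^{\prime}$ lies in the online span, only the transverse component contributes to the residual loss, and this transverse component splits cleanly into the two independent sources of noise entering \byol{}. The one subtlety worth flagging in the write-up is that although the split is additive \emph{inside} the projector, the squared Frobenius norm will in general produce a cross-term between the two noise components, so the decomposition should not be read as a variance-style sum of two non-negative contributions.
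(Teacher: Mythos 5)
Your proposal is correct and follows exactly the paper's argument: the same telescoping split $Z_{\xi}^{\prime} = (Z_{\xi}^{\prime} - Z_\xi) + (Z_\xi - Z_\theta) + Z_\theta$ is substituted into the optimal-predictor form of the loss from Proposition~\ref{prop:one}, with the $Z_\theta$ term annihilated by $\operatorname{Proj}_{(Z_\theta)^{\bot}}$. Your explicit verification that $\operatorname{Proj}_{(Z_\theta)^{\bot}}(Z_\theta)=0$ and your caveat about the cross-term under the squared Frobenius norm are sound additions, but the route is the same as the paper's.
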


\begin{proof}
By rewriting $Z_{\xi}^{\prime} = (Z_{\xi}^{\prime} - Z_\xi) + (Z_\xi - Z_\theta) + Z_\theta$ and substituting this expression into Equation \ref{orthproj}, where $Z_\theta$ then vanishes. $Z_{\xi}^{\prime} - Z_\xi$ can be interpreted as resampling noise due to stochastic augmentations, while the difference $(Z_\xi - Z_\theta) + Z_\theta$ can be seen as exponential moving average noise.

This decomposition may help explain why the performance of tied-weights (that is, EMA-less, with $\tau=1$) versions of \byol \citep{Grill2020, SimSIAM} is suboptimal: the vanishing of the second term providing an additional source of gradient for learning.
\end{proof}

\propthree*
\begin{proof} Substituting $Z_{\xi}^{\prime} = Z_{\theta} + \sigma_t \cdot G$ into $P^{*}_\theta = \left(Z_{\theta}^{\top} Z_{\theta}\right)^{-1} Z_{\theta}^{\top}  Z_{\xi}^{\prime}$ (Equation  \ref{closed_form_lrp_predictor}) immediately yields:

\begin{equation*}\label{optlinpredG}
P^{*}_\theta = I_f + \sigma_t \left(Z_{\theta}^{\top} Z_{\theta}\right)^{-1} Z_{\theta}^{\top} G.
\end{equation*}
and then by assumption $\sigma_t \rightarrow 0$ which makes the second term vanish (provided for instance features $Z_\theta$ are bounded, which is realized in practice) and gets the result we want of $P^{*}_\theta \rightarrow I_f$. \\

We can prove a slightly stronger result in this model. In this idealized setting $P^{*}_\theta$ cannot be exactly an orthogonal projector batchwise (as $P^{*2}_\theta \neq P^{*}_\theta$), but we get a formal control on how much it deviates from that property by upper-bounding $P_\theta^{*} P^{*\top}_\theta - P_\theta^{*}$ and writing 
\begin{equation*}\label{elaboratecontrol}
    \frac{\vertiii{P_\theta^{*} P^{*\top}_\theta - P_\theta^{*}}}{\vertiii{P_\theta^{*}}} \leq \sigma_t \vertiii{Z_\theta}^{-1} \rightarrow 0 \quad \text{if $Z_\theta$ bounded.}
\end{equation*}
\end{proof}

\propfour*
\begin{proof}
We obtain this thanks to $\big(Z_\theta^{\top} Z_\theta \big)^{-1} \approx 2I - Z_\theta^{\top} Z_\theta + \cdots$, the leading term of a \emph{Neumann expansion} \citep{Zhu2015OnTM}. This is simply substitued into the optimal linear regression predictor $\left(Z_{\theta}^{\top} Z_{\theta}\right)^{-1} Z_{\theta}^{\top}  Z_{\xi}^{\prime}$ (Equation \ref{closed_form_lrp_predictor}). Using only the first term is not only justified at the end of training (when $\left( Z_{\theta}^{\top} Z_{\theta}\right)^{-1} \rightarrow I_f$) but also if we can scale the norm of batches $Z_\theta$ to be small. This generalizes the usage of $P^{*}_{\theta} \approx \mathbb{E} \big[ Z_{\theta}^{\top} Z_{\xi}^{\prime} \big]$, which was separately named \emph{DirectCopy} in \citep{Wang2021TowardsDR}.
\end{proof}

\propfive*
\begin{proof}

Since $P = P^{\top} P$ as it is exactly an orthogonal projection, we can rewrite the \byol loss minimization as
\begin{align}\label{kpca}
    \min _{P, P P^{\top}=I_{k}} \mathbb{E}\left\|Z_\theta - P Z_{\operatorname{sg}(\theta)} \right\|^{2} &= \min _{P , P P^{\top}=I_{k}} \mathbb{E}\left\|Z_{\theta}-P^{\top} P Z_{\operatorname{sg}(\theta)}\right\|^{2} \\ 
    &= \min _{P , P P^{\top}=I_{k}} \operatorname{tr}\left(\left(\mathrm{I}_{d}-\mathrm{P}\right) \mathbb{E}\left[\mathrm{Z}_{\theta} \mathrm{Z}_{\operatorname{sg}(\theta)}^{\top} \right]\right)
\end{align}
which directly reflects the variational characterization of $k$-PCA.

We also note that this last form sheds light on a surprising empirical observation that parametrizing the linear predictor to be \emph{residual} \citep{ResNet}, effectively initializing $P$ at the identity, precludes all learning and makes no progress whatsoever, despite it being mathematically equivalent to the standard formulation of a neural predictor.
\end{proof}

\propeight*
\begin{proof}
Simply recovered by writing the \byol inner objective (with rank-$1$ predictor parametrized by row vector $p$, $L^2$ normalized  without loss of generality), so that $P = \frac{p^{\top} p}{\left\|p\right\|^{2}}$:
\begin{equation}
    \mathcal{L}_{\text{BYOL}} = \mathbb{E}\left\|Z_{\theta}-\frac{p^{\top} p}{\left\|p\right\|^{2}} Z_{\operatorname{sg}(\theta)}\right\|^{2}
\end{equation}
and then computing $\nabla_p \mathcal{L}_{\text{BYOL}}$ for this expression, then taking a gradient ascent step $w_t \leftarrow w_{t-1} + \epsilon \cdot \nabla_p \mathcal{L}_{\text{BYOL}}$ (see \citet{Tang2019ExponentiallyCS}).
\end{proof}

\newpage
\section{Details of algorithms and pseudocode}

\subsection{Hyperparameters}\label{appx:hyper}

Our data augmentations and evaluation protocols rigorously follow the standards described in~\citep{Grill2020, SimCLR}. Similarly, our hyperparameters are the same as in~\citet{Grill2020}, except for two differences. The first one is the learning rate and weight decay combination we employ - given shorter training compared to the original $1000$ epochs used in \byol, we choose a LARS~\citep{You2017LargeBT} optimizer initial learning rate of $0.45$ at $100$ epochs and $0.3$ at $300$ epochs, both in conjunction with a weight decay parameter of $1.0 \times 10^{-6}$ and an initial target decay rate $\tau$ set to $0.99$.

The second is the addition of an exponential moving average operation applied to the predictor matrix after computation of the relevant predictor formula (and potential extra ridging operation). We found empirically that this EMA operation stabilizes training. The formula we use to update (previous) predictor $P$ given current batchwise iterate $P_n$ is
\begin{equation}\label{predEMA}
 P \leftarrow \rho P + (1-\rho) P_n
\end{equation}

The value of constant covariance EMA parameter $\rho$ is predictor-dependent, and picked at $0.8$ for the LRP predictor, $0.99$ for the NE, Visser, NS and $\texttt{NS}^2$ predictors, and $0.999$ for the Stiefel predictor. We did not seek to optimize for a specific temporal schedule for $\rho$, which might enable unifying its value across all predictors. 

Finally, we set the Visser step size ($\eta$ in Proposition \ref{visser}) to $0.001$.

\subsection{Predictors pseudo-code}

For ease of reproducibility and clarity of understanding, we below give Python $3$ and specifically JAX \citep{jax2018github} pseudo-code that can be used to compute our predictors, reflecting main text equations, as well as any additional scaling factors that we empirically found could help numerical stability (at the margin). Notation-wise, the two lines
\noindent\begin{minipage}[t]{0.95\textwidth}
  \begin{algorithm}[H]
\begin{lstlisting}[language=Python, label={alg:latents}]
z_theta = online_out['projection_view1']
z_xi = target_out['projection_view2']
\end{lstlisting}
\end{algorithm}
\end{minipage}%

correspond to the computation of batches of latents $Z_\theta$ and $Z^{'}_{\xi}$, respectively. We begin with our $4$ proposed formulaic predictors, before also giving, for convenience, a comparable way of computing the LRP predictor from~\citep{Grill2020}.

\subsubsection{NE predictor pseudo-code}

\noindent\begin{minipage}[t]{0.95\textwidth}
  \begin{algorithm}[H]
\begin{lstlisting}[language=Python, label={alg:NE}, caption=Pseudo-code for the NE predictor.]
def ne_predictor(zt, zx):
  
  zttzx = jax.numpy.matmul(jax.numpy.transpose(zt), zx)
  p = 2.0 * zttzx
  p -= jax.numpy.matmul( jax.numpy.transpose(zt), jax.numpy.matmul(zt, zttxt))
      
  return p
  
z_theta = online_out['projection_view1']
z_xi = target_out['projection_view2']

z_theta /= jax.numpy.linalg.norm(z_theta, ord=2)
z_xi  /= jax.numpy.linalg.norm(z_xi, ord=2)

pred = ne_predictor(z_theta, z_xi)
pred /= jax.numpy.linalg.norm(pred, ord=2)

\end{lstlisting}
\end{algorithm}
\end{minipage}%

\subsubsection{Visser predictor pseudo-code}

\noindent\begin{minipage}[t]{0.95\textwidth}
  \begin{algorithm}[H]
\begin{lstlisting}[language=Python, label={alg:visser}, caption=Pseudo-code for Visser predictor.]
def visser(m, n, eta):

  p = 1/(2.0 * eta) * jax.numpy.eye(m.shape[0])
  for _ in range(n):
    p += eta * (m - jax.numpy.matmul(p, p))

  return p

z_theta = online_out['projection_view1']
sigma = jax.numpy.matmul(jax.numpy.transpose(z_theta), z_theta)

pred = visser(sigma, n_iters, epsilon)

\end{lstlisting}
\end{algorithm}
\end{minipage}%

\subsubsection{Newton-Schulz predictor pseudo-code}

\noindent\begin{minipage}[t]{0.95\textwidth}
  \begin{algorithm}[H]
\begin{lstlisting}[language=Python, label={alg:ns}, caption=Pseudo-code for Newton-Schulz predictor.]
def newtonschulz(m, n):

  a = m
  fronorm_a = jax.numpy.linalg.norm(a, 'fro')
  a /= fronorm_a

  b = jax.numpy.eye(m.shape[0])
  c = 3.0 * b
  for _ in range(n):
    ba = jax.numpy.matmul(b, a)
    a = 0.5 * jax.numpy.matmul(a, c - ba)
    b = 0.5 * jax.numpy.matmul(c - ba, b)

  a *= jax.numpy.sqrt(fronorm_a)
  b /= jax.numpy.sqrt(fronorm_a)
  
  return a, b
  
z_theta = online_out['projection_view1']
sigma = jax.numpy.matmul(jax.numpy.transpose(z_theta), z_theta)

pred = newtonschulz(sigma, n_iter)[0]
  
\end{lstlisting}
\end{algorithm}
\end{minipage}%

\subsubsection{Stiefel predictor pseudo-code}

\noindent\begin{minipage}[t]{0.95\textwidth}
  \begin{algorithm}[H]
\begin{lstlisting}[language=Python, label={alg:stiefel}, caption=Pseudo-code for Stiefel predictor.]
def stiefel(m, n):
  x = m.transpose()
  xxt = jax.numpy.matmul(x, m)
  xxt /= m.shape[0]

  x_inv_sqrt = newtonschulz(xxt, n)[1]
  x_inv_sqrt /= jax.numpy.linalg.norm(x_inv_sqrt, ord='fro')

  p = jax.numpy.matmul(x_inv_sqrt, x)

  return p 
  
z_theta = online_out['projection_view1']
sigma = jax.numpy.matmul(jax.numpy.transpose(z_theta), z_theta)

pred = stiefel(sigma, n_iter)
    
\end{lstlisting}
\end{algorithm}
\end{minipage}%

\subsubsection{Linear Regression Predictor pseudo-code}

\noindent\begin{minipage}[t]{0.95\textwidth}
  \begin{algorithm}[H]
\begin{lstlisting}[language=Python, label={alg:LRP}, caption=Pseudo-code for LRP predictor.]
def lrp(zt, zx, safe_eps=1e-12):
    normfactor = 0.5 * jax.numpy.linalg.norm(zt, ord='fro')
    normfactor += 0.5 * jax.numpy.linalg.norm(zx, ord='fro') + safe_eps
    
    p = jax.numpy.matmul(jax.numpy.linalg.pinv(zt/normfactor), zx/normfactor)
    return p

z_theta = online_out['projection_view1']
z_xi = target_out['projection_view2']
pred = lrp(z_theta, z_xi)



\end{lstlisting}
\end{algorithm}
\end{minipage}%

\newpage
\section{Additional quantitative results}

\subsection{Top-$5$ evaluation of predictors}

\begin{table}[ht]
\centering
\scalebox{0.85}{
\begin{tabular}{l|cccccccc}
Predictor   & $0.0$ & $0.15$ & $0.3$ & $0.45$ & $0.6$ & $0.75$ & $0.9$  \\ \midrule
$\text{DirectCopy}^{\dagger}$            & $43.6$ & $87.4$ & $88.1$ & $\underline{88.3}$ & $88.3$ & $88.2$ & $88.2$ \\ \midrule
NE                   & $84.4$ & $\underline{88.9}$ & $88.7$ & $88.6$ & $88.4$ & $88.2$ & $88.0$   \\
Visser                & $\underline{88.8}$ & $88.7$ & $88.8$ & $88.8$ & $88.8$ & $88.8$ & $88.8$ \\
NS  $,n=9$                & $87.6$ & $87.7$ & $87.8$ & $88.1$ & $88.4$ & $88.4$ & $\underline{88.5}$ \\
$\text{NS}^{2}$ $,n=7$ & $87.9$ & $88.8$ & $88.9$ & $89.0$   & $88.9$ & $89.0$   & $\underline{89.0}$   \\
Stiefel $,n=9$              & $87.8$      & $88.4$     & $89.0$      & $89.0$      & $89.0$     & $\underline{\mathbf{89.1}}$      & $89.0$ \\ \bottomrule

\end{tabular}
}
\vspace{-1.0mm}
\caption{\label{hundredepochstop5}Top-$5$ accuracy on ImageNet-$1$k (linear protocol) of our predictors, at $100$ epochs, as ridging $\alpha$ varies. \emph{DirectCopy} \citep{Wang2021TowardsDR} is our own replication. Average over $3$ random seeds. Standard trainable linear predictor \byol achieves $88.5\%$.}
\end{table}

\begin{table}[ht]
\centering
\scalebox{0.85}{
\begin{tabular}{l|cccccccc}
Predictor  & $0.0$ & $0.15$ & $0.3$ & $0.45$ & $0.6$ & $0.75$ & $0.9$  \\ \midrule
$\text{DirectCopy}^{\dagger}$          & $74.4$ & $90.5$ & $90.6$ & $90.6$ & $90.6$ & $\underline{90.7}$ & $90.6$ \\ \midrule
NE                   & $89.2$   & $\underline{90.8}$ & $90.6$ & $90.6$ & $90.4$ & $90.3$ & $90.2$ \\
Visser                & $\underline{90.6}$ & $90.6$ & $90.6$ & $90.6$ & $90.5$ & $90.6$ & $90.5$ \\
NS  $,n=9$                & $90.2$ & $90.8$ & $\underline{90.9}$ & $90.9$ & $90.8$ & $90.9$ & $90.9$ \\
$\text{NS}^{2}$ $,n=7$ & $90.3$ & $90.9$ & $\underline{91.0}$   & $91.0$   & $90.9$ & $90.9$ & $90.9$ \\
Stiefel $,n=9$               & $90.5$      & $\underline{\mathbf{91.1}}$      & $90.9$     & $90.9$      & $91.0$     & $90.9$      & $90.9$      \\ \bottomrule
\end{tabular}
}
\vspace{-1.0mm}
\caption{\label{threehundredepochstop5}Top-$5$ accuracy on ImageNet-$1$k (linear protocol) of our predictors, at $300$ epochs, as ridging $\alpha$ varies. \emph{DirectCopy} \citep{Wang2021TowardsDR} is our own replication. Average over $3$ random seeds. Standard trainable linear predictor \byol achieves $91.0\%$.}
\end{table}

\subsection{Ablation on the number of iterations in approximate methods}\label{appx:ablation}

\textbf{Ablation on the number of iterations.} It is instructive to vary the number of iterations for our approximate square-root methods. Our baseline Newton-Schulz predictor uses $9$ iterations. In Table \ref{nsiterations}, we compute the performance as a function of variable number of iterations $n$. We see it remains robust both at $100$ and $300$ epochs as long as $n \geq 5$, and that variations in performance remain minimal, in line with intuition for a second-order optimization method. Good results are obtained with as few as $3$ iterations. In Table \ref{visseriterations}, we perform the same analysis with the number of necessary Visser iterations. We find that at $100$ epochs, which is the case we're most interested in, at least $50$ iterations are required for good performance, and more iterations doesn't hurt performance. No such clear pattern regarding performance emerges at $300$ epochs.

\begin{table}[!hb]
\centering
\small
\begin{tabular}{l|cccc}
Visser performance & $12$ & $25$ & $50$ & $100$ \\ \midrule
$100$ epochs    &$67.2$& $68.2$  &  $\mathbf{68.4}$ &$68.4$        \\
$300$ epochs    & $\mathbf{71.8}$& $70.6$ &$71.6$& $69.2$ \\ \bottomrule
\end{tabular}
\caption{\label{visseriterations}Top-$1$ accuracy on ImageNet-$1$k (linear protocol) of the Visser predictor, as a function of its number of iterations, for best ridging parameter $\alpha=0.0$. Average over $3$ random seeds.}
\end{table}

\begin{table}[!hb]
\centering
\small
\begin{tabular}{l|cccc}
NS performance & $3$ & $5$ & $7$ & $9$\\ \midrule
$100$ epochs    &$68.0$& $\mathbf{68.1}$  &  $\mathbf{68.1}$ &$68.0$        \\
$300$ epochs    & $71.8$& $\mathbf{72.1}$ &$72.0$& $71.9$   \\ \bottomrule
\end{tabular}
\caption{\label{nsiterations}Top-$1$ accuracy on ImageNet-$1$k (linear protocol) of the NS predictor, as a function of its number of iterations, for best ridging parameter $\alpha=0.9$. Average over $3$ random seeds.}
\end{table}

\begin{table}[!ht]
\centering
\small
\begin{tabular}{l|cccc}
Stiefel performance & $3$ & $5$ & $7$ & $9$\\ \midrule
$100$ epochs    &$66.8$& $68.0$  &  $68.6$ & $\mathbf{68.8}$        \\
$300$ epochs    & $71.8$& $71.9$ &$71.9$& $\mathbf{72.1}$   \\ \bottomrule
\end{tabular}
\caption{\label{stiefeliterations}Top-$1$ accuracy on ImageNet-$1$k (linear protocol) of the Stiefel predictor, as a function of its number of iterations, for best ridging parameter $\alpha=0.3$. Average over $3$ random seeds.}
\end{table}

When all three spectrums of epochs, number of iterations as we've just seen Table \ref{stiefeliterations}, and ridging parameter $\alpha$ are considered, the Stiefel predictor performance is best across the board.

\section{Additional graphs}

Figures \ref{fig:DSV_four} and \ref{fig:distrib_four} extend Figure \ref{fig:distrib_main}. For four different predictors (\emph{DirectCopy}, and our NE, NS and Stiefel predictors), they depict the evolution throughout training of the empirical distribution of singular values of the online latents' covariance, as those are dynamically renormalized between $0$ and $1$ throughout training. These distributions are averaged across $3$ ImageNet trials at $100$ training epochs.  Starting from a unimodal distribution with a very concentrated peak near $0$, these distributions become bimodal with local maxima near $0$ and $1$ throughout training. This is consistent with the expected behaviour of a matrix getting closer and closer to idempotence ($P^2 = P$).

We observe first Figure \ref{fig:DSV_four} that a narrow peak of eigenvalues close to $1$, indicative of a predictor almost perfectly an orthogonal projection, seems to loosely correlate with the final performance reached by that predictor. Second, we see Figure \ref{fig:distrib_four} that there are two distinct temporal profiles for the predictors we depict, and the distributions of singular values they induce. The top $2$ (NE and \emph{DirectCopy}) latch onto a few key singular directions early in training, before making steady progress afterwards. By contrast, the two bottom predictors depicted in that figure, NS and Stiefel, both based on the Newton-Schulz iteration, display a clear pattern of wider selection of singular directions (meaning that the span of these predictors is slower to grow, until midway in training where the behaviour changes abruptly). This behaviour is also associated with slightly better terminal accuracy under linear evaluation.

\begin{figure}
     \centering
     \begin{subfigure}[ht]{0.475\textwidth}
         \centering
         \includegraphics[width=\textwidth]{plots/DSV-DirectCopy.png}
         \caption{\emph{DirectCopy} predictor}
         \label{fig:DSV_DC}
     \end{subfigure}
     \begin{subfigure}[ht]{0.475\textwidth}
         \centering
         \includegraphics[width=\textwidth]{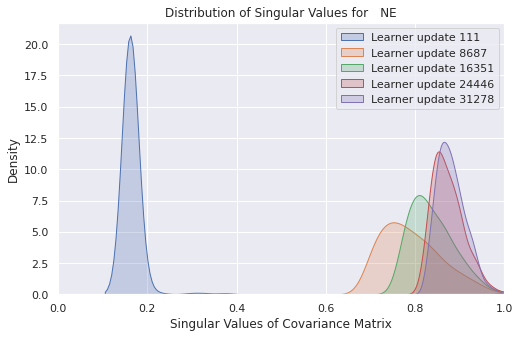}
         \caption{NE predictor}
         \label{fig:DSV_NE}
     \end{subfigure}
     \vfill
     \begin{subfigure}[hb]{0.45\textwidth}
         \centering
         \includegraphics[width=\textwidth]{plots/DSV-NS.png}
         \caption{Newton-Schulz predictor}
         \label{fig:DSV_NS}
     \end{subfigure}
     \begin{subfigure}[hb]{0.45\textwidth}
         \centering
         \includegraphics[width=\textwidth]{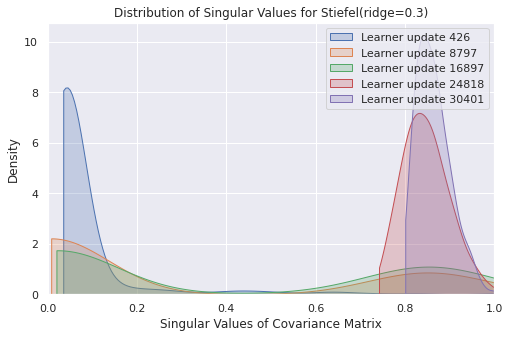}
         \caption{Stiefel predictor}
         \label{fig:DSV_stiefel}
     \end{subfigure}
        \caption{Empirical distribution of the singular values of the covariance of features, for various predictors. Plot throughout $100$ epochs of training, on ImageNet-$1k$, averaged over $3$ random seeds, extending Figure \ref{fig:distrib_main}.}
        \label{fig:DSV_four}
\end{figure}

\begin{figure}
     \centering
     \begin{subfigure}[ht]{0.475\textwidth}
         \centering
         \includegraphics[width=\textwidth]{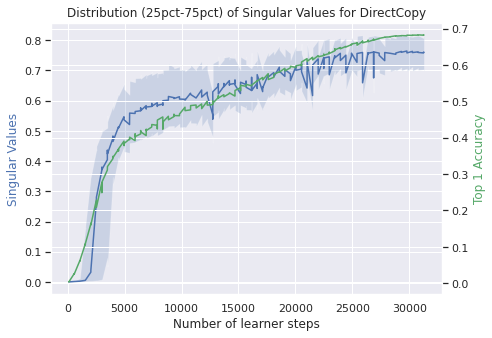}
         \caption{\emph{DirectCopy} predictor}
         \label{fig:distrib_DC}
     \end{subfigure}
     \begin{subfigure}[ht]{0.475\textwidth}
         \centering
         \includegraphics[width=\textwidth]{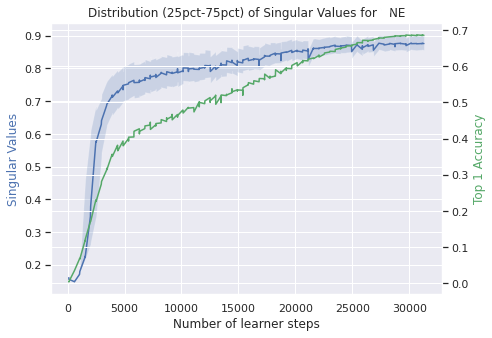}
         \caption{NE predictor}
         \label{fig:distrib_NE}
     \end{subfigure}
     \vfill
     \begin{subfigure}[hb]{0.45\textwidth}
         \centering
         \includegraphics[width=\textwidth]{plots/Distribution-NS.png}
         \caption{Newton-Schulz predictor}
         \label{fig:distrib_NS}
     \end{subfigure}
     \begin{subfigure}[hb]{0.45\textwidth}
         \centering
         \includegraphics[width=\textwidth]{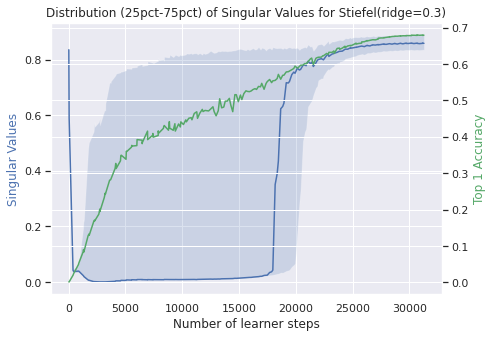}
         \caption{Stiefel predictor}
         \label{fig:distrib_stiefel}
     \end{subfigure}
        \caption{\emph{Blue} (left legend): Median, $25$th and $75$th percentile of empirical distribution of the singular values of the empirical covariance of latents, for four different predictors. \emph{Green} (right legend): Evolution of their Top-$1$ $\%$ linear accuracy. Plot throughout $100$ epochs of training, on ImageNet-$1k$, averaged over $3$ random seeds, extending Figure \ref{fig:distrib_main}.}
        \label{fig:distrib_four}
\end{figure}

\section{Compute costs}

We run our experiments on tranches of $128$ Cloud TPUv3 cores. With this number of devices, each single run usually lasts a few hours, depending on its number of epochs; a typical result with $3$ random seeds over $300$ epochs each requires less than a day of compute in this setting. As we use the LARS optimizer~\citep{You2017LargeBT}, this configuration is robust to lowering the total number of devices without much retuning of the base learning rate, although this comes at a proportional cost in terms of elapsed compute time.

\section{Acknowledgements}

We are extremely appreciative of the collaborative research environment within DeepMind. Many thanks to Matko Bošnjak for his feedback on an earlier draft of this paper, as well as Andr\'as György, Andrew Lampinen, Bernardo \'Avila Pires, Mark Rowland, Mohammad Gheshlaghi Azar, and Zhaohan Daniel Guo for helpful and interesting related conversations.

\end{appendix}

\end{document}